\definecolor{cvprblue}{rgb}{0.21,0.49,0.74}
\theoremstyle{plain}
\newtheorem{theorem}{Theorem}
\newtheorem{lemma}[theorem]{Lemma}
\newtheorem{corollary}[theorem]{Corollary}
\theoremstyle{definition}
\newtheorem{definition}{Definition}
\theoremstyle{remark}
\newtheorem{remark}[theorem]{Remark}
\definecolor{cvprblue}{rgb}{0.21,0.49,0.74}
\title{Learning Straight Flows: Variational Flow Matching for Efficient Generation}
\author{
\textbf{Chenrui Ma}\textsuperscript{1} \quad
\textbf{Xi Xiao}\textsuperscript{2} \quad
\textbf{Tianyang Wang}\textsuperscript{2} \quad
\textbf{Xiao Wang}\textsuperscript{3} \quad
\textbf{Yanning Shen}\textsuperscript{1}\thanks{Corresponding author: yannings@uci.edu.}
\vspace{1mm}
\\
\textsuperscript{1}University of California, Irvine, Irvine, CA 92697, USA
\\
\textsuperscript{2}University of Alabama at Birmingham, Birmingham, AL 35294, USA
\\
\textsuperscript{3}Oak Ridge National Laboratory, Oak Ridge, TN 37830, USA
}
\begin{document}
\maketitle

\begin{abstract}
Flow Matching has limited ability in achieving one-step generation due to its reliance on learned curved trajectories. Previous studies have attempted to address this limitation by either modifying the coupling distribution to prevent interpolant intersections or introducing consistency and mean-velocity modeling to promote straight trajectory learning. However, these approaches often suffer from discrete approximation errors, training instability, and convergence difficulties. To tackle these issues, in the present work, we propose \textbf{S}traight \textbf{V}ariational \textbf{F}low \textbf{M}atching (\textbf{S-VFM}), which integrates a variational latent code representing the ``generation overview'' into the Flow Matching framework. \textbf{S-VFM} explicitly enforces trajectory straightness, ideally producing linear generation paths. The proposed method achieves competitive performance across three challenge benchmarks and demonstrates advantages in both training and inference efficiency compared with existing methods.
\end{abstract}

\section{Introduction}
\label{sec:intro}

The goal of generative modeling is to transform a simple prior distribution into a complex data distribution. Flow Matching~\cite{lipman2023flow, albergo2023building, esser2024scaling} pursues this objective by learning a velocity field whose associated ODE transports samples from the source distribution to the target distribution~\cite{lipman2024flow}. Although Flow Matching adopts linear interpolants as training trajectories, the generative paths it learns are typically \emph{curved}~\cite{geng2025mean, song2023consistency}. This curvature arises because the commonly used independent coupling between the source and target distribution results in many intersecting linear interpolants, forcing the learned velocity field to take an average of incompatible directions~\cite{frans2025one, kim2024consistency}. As a result, Flow Matching often requires many ODE integration steps to maintain sample quality, limiting its efficiency~\cite{hoogeboom2023simple}.

Various methods attempt to alleviate this curvature problem. Some modify coupling strategies~\cite{pooladian2023multisample, ma2025stochastic, albergo2024stochastic, zhang2025hierarchical} or learn improved coupling distributions, aiming to reduce interpolant intersections~\cite{silvestri2025vct, tong2023improving, klein2023equivariant}. Others, such as Rectified Flows~\cite{liu2023flow, zhang2025towards, lee2024improving, esser2024scaling, geng2023onestep, salimans2022progressive, wang2024rectified}, iteratively refine the model in multiple distillation stages so that its trajectories gradually approach straight, optimal-transport paths~\cite{villani2008optimal, albergo2023stochastic}. Meanwhile, Consistency Models~\cite{song2024improved, kim2024consistency, song2023consistency, sabour2025align} and Mean Velocity Models~\cite{frans2025one, geng2025mean, zhang2025alphaflow} enforce temporal consistency in the learned dynamics, encouraging trajectories at different time steps to agree. Although these approaches embody different design principles—better couplings, iterative refinement, or self-consistency—they share a common motivation: to learn straight generative trajectories. However, they all inherit a fundamental limitation. \textbf{Because the independent coupling inherently produces intersecting interpolants, any method that attempts to learn straight trajectories directly from this training structure faces an intrinsic contradiction}~\cite{guo2025variational, liu2023flow}. This misalignment often manifests as training instability, discretization artifacts, or difficulty in converging toward truly straight paths.

In this work, we address this contradiction from a new direction. Rather than enforcing straight trajectories under the independent-coupling structure that inherently causes interpolant intersections, we equip the model with the ability to \emph{disambiguate} these intersections. To this end, we introduce the \textbf{Straight Variational Flow Matching} framework, which \ding{182} leverages a variational latent code~\cite{kingma2013auto, burgess2018understanding, anonymous2025cadvae} to provide a global ``generation overview'' of each source--target pair, enabling the velocity field to choose the correct direction even when interpolants intersecting. Furthermore, we promote \ding{183} straight trajectory learning by penalizing the time variation of the velocity field along the generative path. Together, these components yield naturally straighter trajectories that require far fewer ODE steps to simulate while maintaining high generation quality~\cite{dao2025self, feng2024relational}.

Extensive experiments demonstrate that the proposed method achieves state-of-the-art performance in generation quality, sampling efficiency, and training stability across multiple benchmark datasets. Our theoretical analysis critically clarifies why existing methods struggle to learn straight trajectories under independent couplings, and shows how our formulation resolves this challenge in a principled and effective manner.

In summary, our contributions are as follows.
\begin{itemize}
    \item We identify and prove the fundamental contradiction between straight-trajectory learning and the independent coupling structure used in Flow Matching and related approaches, presenting a theoretical motivation for the proposed method.
    \item We introduce a novel framework that overcomes this contradiction by incorporating a global latent code for disambiguating intersecting trajectories and a straightness objective that encourages time-invariant velocity fields.
    \item Superior experimental results across three challenge benchmarks confirm our theoretical claims and demonstrate substantial improvements in generation fidelity, efficiency, and stability.
\end{itemize}

\section{Related Works}
\label{sec:rela}

\subsection{Rectified Flow and Distillation}

To address the issue of curved generation trajectories, Rectified Flow~\cite{liu2023flow, zhang2025towards, lee2024improving, esser2024scaling, geng2023onestep, salimans2022progressive, wang2024rectified} reformulates the flow-matching learning process into multiple iterative stages. Specifically, it first trains a basic flow-matching model, then uses the data pairs generated by this model to train a new one~\cite{zhang2025hierarchical}. This process is repeated iteratively. Essentially, each subsequent model refines the quality of the training data pairs—reducing trajectory intersections and curvature—by leveraging the outputs of the previous model~\cite{liu2023flow}. This pipeline is also known as distillation~\cite{salimans2022progressive}.
Through this progressive refinement, the model gradually approaches an optimal transport~\cite{kornilov2024optimal} solution, characterized by straight, non-intersecting trajectories within the generation ODE.

Despite its success in reducing curvature and improving trajectory consistency, Rectified Flows and distillation methods suffer from inefficiency: obtaining the final model requires multiple rounds of training, which are both time- and resource-intensive~\cite{wan2024cad, mei2024codi}. 
Moreover, this iterative process may introduce error accumulation, since each subsequent model is trained on data generated by its predecessor rather than the true data distribution~\cite{yin2024improved}. Consequently, the final distilled model often struggles to surpass the generation fidelity of the initial model trained directly on real data~\cite{esser2024scaling, cai2025diffusion}.
Plus, these multi-stage and tightly-scheduled procedures suffer from a need to specify when to end training and begin distillation~\cite{salimans2022progressive, yin2024one}. In contrast, end-to-end methods are efficient, accurate, and can be trained indefinitely to continually improve.

\subsection{Consistency Model and Mean Velocity Model}
Consistency Models represent an important step in straight flow learning, functioning as standalone generative models that eliminate the need for explicit distillation procedures~\cite{song2023consistency, geng2025consistency}. These models enforce consistency constraints between network outputs at different time steps, ensuring that trajectories converge to identical endpoints~\cite{kim2024consistency, lu2025simplifying}. Various formulations and training schemes have been proposed to enhance their stability and sample quality~\cite{song2024improved}.

Mean Velocity Models focus on defining and leveraging flow-based properties between temporal states. For instance, Shortcut Models~\cite{frans2025one} integrate self-consistency losses with flow matching to capture correlations among flows across discrete time intervals, whereas Inductive Moment Matching~\cite{zhou2025inductive} extends this principle to stochastic interpolants, enforcing consistency of their moments over time. Building on this, Meanflows~\cite{geng2025mean} introduce a new ground-truth field representing the mean velocity and model it by the identity relationship between mean velocity and instantaneous velocity, while Flow Maps~\cite{boffi2024flow} is introduced as integrals of flow fields between two time steps, providing a displacement-based characterization complementary to the average-velocity perspective. Together, these methods aim to bridge the gap between local flow dynamics and global consistency in generative processes. Recent studies have sought to unify Consistency Models and Mean Velocity Models, demonstrating that these two classes of methods are mathematically equivalent, differing primarily in their model parameterizations and training procedures~\cite{hu2025cmt, you2025modular}. Although both Consistency Models and Mean Velocity Models move toward a fully end-to-end generative paradigm, their heavy reliance on bootstrapping necessitates carefully scheduled training processes~\cite{frans2025one, kim2024consistency}. Moreover, these approaches often suffer from discrete approximation errors~\cite{geng2025consistency, song2023consistency}, training instability~\cite{song2023consistency}, and convergence challenges~\cite{geng2025mean, you2025modular, ji2025cibr, zhang2025alphaflow}.
In contrast, the proposed method builds upon the Variational Flow Matching framework~\cite{guo2025variational} with a straightness objective, representing a distinct direction from the methods discussed above.

\section{Preliminary}
\label{sec:preliminary}
\noindent\textbf{Notation.}
For a random process \(X=\{X_t\}_{t\in[0,1]}\), write \(\mathrm{Law}(X_t)\) for its marginal law at time \(t\),
and \(\mathbb{E}[\cdot]\) for expectation.
\subsection{Flow Matching}
Let \((X_0,X_1)\) be any coupling on \(\mathbb{R}^d\) with joint density \(\rho(x_0,x_1) = \rho_0(x_0)\,\rho_1(x_1)\), where $\rho_0(x_0)$ is source distribution (e.g Gaussian) and $\rho_1(x_1)$ is target data distribution (e.g image samples).
Define the linear interpolation and \emph{conditional velocity}:
\begin{equation}
\label{eq:linear_interpolation}
\begin{aligned}
X_t=(1-t)X_0+tX_1,\quad \Delta^X=X_1-X_0,\quad t\in[0,1].
\end{aligned}
\end{equation}
Flow Matching models the generation process as an ordinary differential equation (ODE):
\begin{equation}
\label{eq:ODE}
\begin{aligned}
\frac{d}{dt} X_t = v^X(X_t,t) \quad \textit{for} \,\,\,\, t\in[0,1].
\end{aligned}
\end{equation}
The associated \emph{marginal velocity} is
\begin{equation}
\label{eq:marginal_cond_vel}
v^X(x,t) \;=\; \mathbb{E}\big[\Delta^X \,\big|\, X_t=x\big],
\end{equation}
The Flow Matching model parameterized by $\theta$ is trained by minimizing the Conditional Flow Matching loss:
\begin{equation}
\mathcal L_{\mathrm{FM}}(\theta)
\;=\;
\mathbb{E}\!\left[\big\|v_\theta(X_t,t) - \Delta^X \big\|^2 \right],
\end{equation}
instead of the Marginal Flow Matching loss, which is intractable~\cite{lipman2023flow, lipman2024flow}.
Although the linear interpolation path is defined in Eq~\eqref{eq:linear_interpolation}, the learned generating ODE trajectories $X$ are curves, since the \emph{marginal velocity} in Eq~\eqref{eq:marginal_cond_vel} equals the conditional expectation of \emph{conditional velocity} $\Delta^X$.
These trajectories $X$ are rectifiable as defined below~\cite{liu2023flow}.
\begin{definition}[Rectifiability of \(X\)] \label{def:rectifiability}
We say that \(X\) is \emph{rectifiable} if \(v^X(\cdot,t)\) is locally bounded for each \(t\) and the \emph{continuity equation}~\cite{lipman2023flow, lipman2024flow}:
\[
\partial_t \pi_t + \nabla\!\cdot\!\big(v^X(\cdot,t)\,\pi_t\big) \;=\;0,\qquad \pi_{t=0}=\mathrm{Law}(X_0),
\]
admits a unique solution \(\{\pi_t\}_{t\in[0,1]}\).
Equivalently, the ordinary differential equation \(\dot X_t=v^X(X_t,t)\) admits a unique flow of characteristics.
\end{definition}

\subsection{Interpolants Intersection}
\begin{definition}[Non-intersection functional]
For any coupling \((X_0,X_1)\) with linear interpolant \(X_t\), define
\begin{equation}
V\big((X_0,X_1)\big) \;=\; \int_0^1 \mathbb{E}\!\left[\big\|\,\Delta^X - \mathbb{E}[\Delta^X \mid X_t]\,\big\|^2\right]\,dt.
\end{equation}
\end{definition}
\begin{lemma}[Non-intersection $\equiv$ zero conditional variance]
\label{lem:non_intersection_equiv}
See the proof in the Appendix.
For a coupling \((X_0,X_1)\) with linear interpolation \(X_t\), the following are equivalent:
\begin{enumerate}
\item For two independent identically distributed couplings \((X_0,X_1)\) and \((X_0',X_1')\),
\begin{align*}
\exists t\in(0,1): &(1-t)X_0+tX_1=(1-t)X_0'+tX_1' \\
&\mathbb{P}\!\left[(X_0,X_1)\neq(X_0',X_1')\right]=0.
\end{align*}
\item \(V\big((X_0,X_1)\big)=0\); equivalently \(\Delta^X=\mathbb{E}[\Delta^X\mid X_t]\) for \(t\in(0,1)\).
\end{enumerate}
\end{lemma}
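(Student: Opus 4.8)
The plan is to prove both implications through a single structural observation: for each fixed $t\in(0,1)$, the pair $(X_0,X_1)$ and the pair $(X_t,\Delta^X)$ carry exactly the same information. First I would record the linear bijection
\[
(x_0,x_1)\;\longmapsto\;\big((1-t)x_0+tx_1,\;x_1-x_0\big),
\]
whose inverse is $(x,\delta)\mapsto(x-t\delta,\,x+(1-t)\delta)$. Consequently $\sigma(X_t,\Delta^X)=\sigma(X_0,X_1)$, and knowing $X_t$ together with $\Delta^X$ recovers the entire coupling. This is the bridge that lets me translate a statement about $\Delta^X$ (condition (2)) into a statement about the full coupling (condition (1)).

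Next I would rewrite the functional as an integrated conditional variance. Since $\mathbb{E}\big[\,\|\Delta^X-\mathbb{E}[\Delta^X\mid X_t]\|^2\,\big]=\mathbb{E}\big[\mathrm{tr}\,\mathrm{Cov}(\Delta^X\mid X_t)\big]$, we have $V\big((X_0,X_1)\big)=0$ iff for a.e.\ $t$ the conditional covariance $\mathrm{Cov}(\Delta^X\mid X_t)$ vanishes almost surely; being positive semidefinite with zero trace, it must be the zero matrix, which forces $\Delta^X=\mathbb{E}[\Delta^X\mid X_t]$ a.s., i.e.\ $\Delta^X=g_t(X_t)$ for some measurable $g_t$. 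Through the bijection above, this is equivalent to $(X_0,X_1)=\Psi_t(X_t)$ a.s., i.e.\ the coupling is a deterministic function of its time-$t$ interpolant.

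For $(2)\Rightarrow(1)$ the argument is then immediate: if $(X_0,X_1)=\Psi_t(X_t)$ a.s., then for two i.i.d.\ couplings the event $X_t=X_t'$ yields $(X_0,X_1)=\Psi_t(X_t)=\Psi_t(X_t')=(X_0',X_1')$, so distinct couplings cannot share the same interpolant. For $(1)\Rightarrow(2)$ I would argue by contraposition: if $V>0$, then on a set of times $t$ of positive measure the regular conditional law $P_t(x,\cdot)$ of $\Delta^X$ given $X_t=x$ is non-degenerate on a set of $x$ of positive $\mathrm{Law}(X_t)$-measure. Choosing two distinct points $\delta\neq\delta'$ in its support and drawing two independent copies, with positive probability both interpolants pass through the same $x$ while $\Delta^X=\delta\neq\delta'=\Delta^{X'}$; by the bijection $(X_0,X_1)\neq(X_0',X_1')$, contradicting (1).

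The main obstacle is the measure-theoretic reading of condition (1) when $X_t$ has a continuous law, in which case the literal event $\{X_t=X_t'\}$ is null and the na\"ive implication is vacuous. The correct formulation---and the crux of the $(1)\Rightarrow(2)$ direction---is stated in terms of regular conditional probabilities: \emph{intersection forces coincidence} means precisely that the disintegration $P_t(x,\cdot)$ is a Dirac mass for a.e.\ $x$. Making this equivalence fully rigorous requires a disintegration and measurable-selection argument to exhibit the two distinct couplings on the positive-probability intersection event; this is where the proof must be most careful, while the remaining steps are routine consequences of the linear bijection and the variance decomposition.
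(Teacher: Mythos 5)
Your proof follows essentially the same route as the paper's: both directions hinge on the observation that $V\big((X_0,X_1)\big)=0$ iff the slope $\Delta^X$ is a measurable function of $X_t$, which via the linear bijection $(x_0,x_1)\leftrightarrow\big((1-t)x_0+tx_1,\;x_1-x_0\big)$ is exactly the statement that two distinct couplings cannot pass through the same point at the same time. The paper's appendix gives only a two-line sketch of this argument, and your additional observation---that when $\mathrm{Law}(X_t)$ is atomless the literal event $\{X_t=X_t'\}$ is null, so condition (1) must be read through regular conditional probabilities (the disintegration $P_t(x,\cdot)$ being a Dirac mass for a.e.\ $x$) for the implication $(1)\Rightarrow(2)$ to have content---identifies a real imprecision in the paper's own proof that your more careful formulation repairs.
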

The curved ODE trajectories $X$ learned by Flow Matching arise from the intersection of interpolants, which is undesirable, as it leads to $V\big((X_0, X_1)\big) \neq 0$. Consequently, multiple sampling steps are required to simulate the ODE trajectory with a tolerable discretization error:
\begin{equation}
\label{eq:ODE_simulation}
X_{t_{i+1}} = X_{t_i} + (t_{i+1} - t_i) \, v_\theta(X_{t_i}, t_i)
\end{equation}
rather than a few-step or even a one-step generation.

\section{Method}
\label{sec:method}

\subsection{Straight Interpolants}
Ideally, the generation trajectories are \emph{straight interpolants}, whose specific definition compatible with Flow Matching trajectories $X$ is shown below.
\begin{definition}[Straight interpolation compatible with \(X\)]
\label{def:Z}
A process \(Z=\{Z_t\}_{t\in[0,1]}\) on the same probability space as \(X\) is called a
\emph{straight interpolants} compatible with Flow Matching trajectories \(X\) if the following hold:
\begin{itemize} [leftmargin=2em]
\item[(Z1)] \textbf{Linear paths.} There exist random endpoints \((Z_0,Z_1)\) with \(Z_t=(1-t)Z_0+tZ_1\) and \(\Delta^Z=Z_1-Z_0\).
\item[(Z2)] \textbf{Non-intersection.} \(V\big((Z_0,Z_1)\big)=0\). Equivalently, \(\Delta^Z=\mathbb{E}[\Delta^Z\mid Z_t]\) for \(t\in[0,1]\).
\item[(Z3)] \textbf{Velocity-field matching.} With \(\mu_t=\mathrm{Law}(Z_t)\), for \(x \sim \mu_t\) and \(t\in[0,1]\).
\[
v^Z(x,t)=\mathbb{E}[\Delta^Z\mid Z_t=x]=v^X(x,t).
\]
\item[(Z4)] \textbf{Initialization.} \(Z_0=X_0\).
\end{itemize}
\end{definition}
The \emph{straight interpolants} $Z$ satisfy several key properties as shown below.
\begin{theorem}[Marginal preservation]
\label{thm:marginal_preservation}
See the proof in the Appendix.
Assume \(X\) is rectifiable and \(Z\) satisfies Definition~\ref{def:Z}.
Then \(\mathrm{Law}(Z_t)=\mathrm{Law}(X_t)\) for all \(t\in[0,1]\).
\end{theorem}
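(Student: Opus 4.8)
The plan is to reduce the claim to the uniqueness clause in the definition of rectifiability (Definition~\ref{def:rectifiability}). Concretely, I would show that the curve of marginals $\mu_t=\mathrm{Law}(Z_t)$ solves exactly the same continuity equation that characterizes $\pi_t=\mathrm{Law}(X_t)$, with the same initial datum; uniqueness then forces $\mu_t=\pi_t$ for every $t$. This sidesteps any direct pathwise comparison of the two processes and instead compares them at the level of laws, which is precisely where rectifiability gives leverage.

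First I would derive the continuity equation for $Z$ in weak form. Fix a smooth, compactly supported test function $\phi$. Using (Z1), along every path $\dot Z_t=\Delta^Z$ is constant in $t$, so differentiating under the expectation gives
\[
\frac{d}{dt}\,\mathbb{E}[\phi(Z_t)] = \mathbb{E}\big[\nabla\phi(Z_t)\cdot\Delta^Z\big].
\]
Conditioning on $Z_t$ and applying the tower property, the inner factor becomes $\mathbb{E}[\Delta^Z\mid Z_t]$, which by (Z3) equals $v^Z(Z_t,t)=v^X(Z_t,t)$. Hence
\[
\frac{d}{dt}\int \phi\,d\mu_t = \int \nabla\phi(x)\cdot v^X(x,t)\,d\mu_t(x),
\]
which is exactly the weak formulation of $\partial_t\mu_t + \nabla\!\cdot\!\big(v^X(\cdot,t)\,\mu_t\big)=0$.

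Next I would pin down the initial condition and invoke uniqueness. By (Z4), $\mu_0=\mathrm{Law}(Z_0)=\mathrm{Law}(X_0)=\pi_0$. Since $X$ is rectifiable, $v^X(\cdot,t)$ is locally bounded and the continuity equation with datum $\mathrm{Law}(X_0)$ admits a unique solution; as $\{\pi_t\}$ is one solution (the defining property of the Flow Matching marginal velocity in Eq.~\eqref{eq:marginal_cond_vel}) and $\{\mu_t\}$ is another, they must coincide. This yields $\mathrm{Law}(Z_t)=\mathrm{Law}(X_t)$ for all $t\in[0,1]$. Note that the marginal identity rests on (Z1), (Z3), and (Z4), while (Z2) serves to guarantee that the straight paths are internally consistent.

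The main obstacle I anticipate is the rigorous justification of the weak continuity equation: differentiating under the expectation and applying the tower property requires mild integrability of $\Delta^Z$ together with local boundedness of $v^X$, and one must verify that $\{\mu_t\}$ lies in the same solution class in which rectifiability asserts uniqueness. Establishing that $\{\mu_t\}$ is an admissible solution—weakly continuous in $t$ with $v^X$ locally integrable against it—so that the uniqueness clause genuinely applies, is the delicate point; everything else is bookkeeping through the tower property.
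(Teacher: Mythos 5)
Your proof is correct and follows essentially the same route as the paper: derive the weak continuity equation for $\mu_t=\mathrm{Law}(Z_t)$ with velocity $v^X$ via a test function, match the initial datum using (Z4), and invoke the uniqueness clause of rectifiability. The one small difference is that the paper passes from $\Delta^Z$ to $v^Z(Z_t,t)$ pathwise via (Z2), whereas your tower-property step shows (Z2) is not actually needed for this theorem --- a slightly cleaner observation, but not a different argument.
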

\begin{theorem}[Convex transport-cost reduction]
\label{thm:convex_cost}
See the proof in the Appendix.
Under the assumptions of Theorem~\ref{thm:marginal_preservation}, for any convex \(c:\mathbb{R}^d\to\mathbb{R}\),
\[
\mathbb{E}\!\left[c(Z_1-Z_0)\right] \;\le\; \mathbb{E}\!\left[c(X_1-X_0)\right].
\]
If \(c\) is strictly convex, equality holds if and only if \(V\big((X_0,X_1)\big)=0\).
\end{theorem}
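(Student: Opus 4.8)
The plan is to exploit the marginal-preservation result from Theorem~\ref{thm:marginal_preservation} together with a conditional Jensen inequality applied to the convex cost $c$. The key observation is that both $\Delta^X = X_1 - X_0$ and $\Delta^Z = Z_1 - Z_0$ are linked through the shared marginal velocity field: by (Z3) we have $v^Z(x,t) = \mathbb{E}[\Delta^Z \mid Z_t = x] = v^X(x,t) = \mathbb{E}[\Delta^X \mid X_t = x]$, and by Theorem~\ref{thm:marginal_preservation} the laws $\mathrm{Law}(Z_t) = \mathrm{Law}(X_t)$ agree for every $t$. The decisive structural fact is that for the straight interpolants, non-intersection (Z2) forces $\Delta^Z = \mathbb{E}[\Delta^Z \mid Z_t]$ almost surely, so $\Delta^Z$ coincides with the marginal velocity evaluated along the path, whereas for $X$ the quantity $\Delta^X$ generally differs from its conditional expectation $\mathbb{E}[\Delta^X \mid X_t]$.

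\medskip

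First I would fix any time $t \in (0,1)$ and write $m(x) := v^X(x,t) = \mathbb{E}[\Delta^X \mid X_t = x]$. Because $Z$ has non-intersecting straight paths, $\Delta^Z$ is $\sigma(Z_t)$-measurable and equals $m(Z_t)$; since $\mathrm{Law}(Z_t) = \mathrm{Law}(X_t)$, the random variable $m(Z_t)$ has the same distribution as $m(X_t) = \mathbb{E}[\Delta^X \mid X_t]$. Therefore $\mathbb{E}[c(Z_1 - Z_0)] = \mathbb{E}[c(m(Z_t))] = \mathbb{E}\!\left[c\!\left(\mathbb{E}[\Delta^X \mid X_t]\right)\right]$. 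On the other side, conditional Jensen applied to the convex function $c$ gives $c\!\left(\mathbb{E}[\Delta^X \mid X_t]\right) \le \mathbb{E}[c(\Delta^X) \mid X_t]$ pointwise, and taking total expectation yields $\mathbb{E}\!\left[c\!\left(\mathbb{E}[\Delta^X \mid X_t]\right)\right] \le \mathbb{E}[c(\Delta^X)] = \mathbb{E}[c(X_1 - X_0)]$. Chaining the two relations produces exactly the claimed inequality $\mathbb{E}[c(Z_1 - Z_0)] \le \mathbb{E}[c(X_1 - X_0)]$.

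\medskip

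For the equality case, I would invoke the strict-convexity characterization of equality in Jensen's inequality: when $c$ is strictly convex, $c\!\left(\mathbb{E}[\Delta^X \mid X_t]\right) = \mathbb{E}[c(\Delta^X) \mid X_t]$ almost surely if and only if $\Delta^X = \mathbb{E}[\Delta^X \mid X_t]$ almost surely. Integrating over $t$ and comparing against the definition of the non-intersection functional $V$ shows this is equivalent to $V\big((X_0, X_1)\big) = 0$, which is precisely the stated condition. Here Lemma~\ref{lem:non_intersection_equiv} supplies the bridge between the variance functional and almost-sure equality of $\Delta^X$ with its conditional expectation.

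\medskip

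The main obstacle I anticipate is the measure-theoretic justification that $\Delta^Z = m(Z_t)$ can be transferred into an equality in \emph{distribution} with $m(X_t)$ cleanly across all $t$ simultaneously, rather than merely for fixed $t$ — in particular, ensuring that the endpoint difference $\Delta^Z$ is genuinely time-independent (so that evaluating at any single $t$ recovers the whole) and that the identification $\Delta^Z = m(Z_t)$ is consistent with the marginal-preservation theorem. This requires care that the straight interpolants constructed in Definition~\ref{def:Z} are well-defined as a coupling and that the conditional expectation $m$ is the same function appearing on both sides; once $\mathrm{Law}(Z_t) = \mathrm{Law}(X_t)$ is in hand from Theorem~\ref{thm:marginal_preservation}, the remaining steps are a direct application of Jensen and its equality condition.
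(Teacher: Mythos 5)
Your proof is correct and follows essentially the same route as the paper's: transfer \(\mathbb{E}[c(Z_1-Z_0)]\) to \(\mathbb{E}\!\left[c\big(\mathbb{E}[\Delta^X\mid X_t]\big)\right]\) via (Z2), (Z3) and Theorem~\ref{thm:marginal_preservation}, then apply conditional Jensen, with strict convexity pinning the equality case to \(V((X_0,X_1))=0\) through Lemma~\ref{lem:non_intersection_equiv}. The only difference is that you use the pointwise identity \(\Delta^Z=v^Z(Z_t,t)\) at a fixed \(t\), whereas the paper first writes \(\Delta^Z=\int_0^1 v^Z(Z_t,t)\,dt\) and applies a Jensen inequality in time before the conditional Jensen step — a step that is in fact an equality under (Z2), so your version is a slight streamlining of the same argument.
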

Thus, we further conclude the equivalent traits of \emph{straight interpolants} for Flow Matching trajectories.
\begin{theorem}[Equivalent characterizations of straight interpolants]
\label{thm:equivalences}
See the proof in the Appendix.
Assume \(X\) is rectifiable and \(Z\) satisfies Definition~\ref{def:Z}.
The following are equivalent: when they hold, we say $X$ yielded from couplings \((X_0,X_1)\) are \emph{straight interpolants}, which coincide with $Z$.
\begin{enumerate} [leftmargin=2em]
\item[(i)] \textbf{Tight convex cost.} There exists a strictly convex \(c\) with
\(\mathbb{E}[c(Z_1-Z_0)]=\mathbb{E}[c(X_1-X_0)]\).
\item[(ii)] \textbf{Endpoint coincidence.} \((Z_0,Z_1)=(X_0,X_1)\).
\item[(iii)] \textbf{Pathwise equality.} \(Z_t=X_t\) for all \(t\in[0,1]\).
\item[(iv)] \textbf{Non-intersection for \(X\).} \(V\big((X_0,X_1)\big)=0\).
\end{enumerate}
\end{theorem}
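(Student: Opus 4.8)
The plan is to prove the four-way equivalence by \emph{reusing} the earlier results rather than re-deriving everything: Theorem~\ref{thm:convex_cost} already encodes the equality case that links the convex-cost statement to non-intersection, and the rectifiability hypothesis (Definition~\ref{def:rectifiability}) supplies the uniqueness of characteristics needed for the pathwise statements. Accordingly I would organize the argument around two clusters, the ``algebraic/analytic'' pair $\{(i),(iv)\}$ and the ``pathwise'' cluster $\{(ii),(iii),(iv)\}$, so that every one of (i)--(iv) connects to (iv) as a hub.

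First I would dispatch the two cheap equivalences. For $(ii)\Leftrightarrow(iii)$, note that $X$ and $Z$ are both affine in $t$ by Eq.~\eqref{eq:linear_interpolation} and (Z1), so each path is determined by its endpoints; combined with $Z_0=X_0$ from (Z4), the pathwise identity $Z_t=X_t$ for all $t$ is equivalent to the single endpoint identity $Z_1=X_1$, i.e. $(Z_0,Z_1)=(X_0,X_1)$. Evaluating (iii) at $t=0,1$ yields (ii), and substituting back into the affine forms recovers (iii). For $(i)\Leftrightarrow(iv)$, I would invoke the equality clause of Theorem~\ref{thm:convex_cost} directly: if some strictly convex $c$ achieves $\mathbb{E}[c(Z_1-Z_0)]=\mathbb{E}[c(X_1-X_0)]$, that clause forces $V\big((X_0,X_1)\big)=0$, giving (iv); conversely, (iv) makes the inequality tight for \emph{every} strictly convex $c$, so in particular such a $c$ exists and (i) holds.

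The crux is $(iv)\Leftrightarrow(iii)$, where the rectifiability assumption does the real work. The direction $(iii)\Rightarrow(iv)$ is immediate: $Z_t=X_t$ gives $\Delta^Z=\Delta^X$, and (Z2) states $V\big((Z_0,Z_1)\big)=0$, hence $V\big((X_0,X_1)\big)=0$. For the harder direction $(iv)\Rightarrow(iii)$, I would show both processes solve the same characteristic ODE and then appeal to uniqueness. By Lemma~\ref{lem:non_intersection_equiv}, $V\big((X_0,X_1)\big)=0$ gives $\Delta^X=\mathbb{E}[\Delta^X\mid X_t]=v^X(X_t,t)$ almost surely on $(0,1)$; since linearity gives $\dot X_t=\Delta^X$, the path $X$ satisfies $\dot X_t=v^X(X_t,t)$. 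Independently, $Z$ always satisfies $\dot Z_t=\Delta^Z=\mathbb{E}[\Delta^Z\mid Z_t]=v^Z(Z_t,t)=v^X(Z_t,t)$ by (Z2) and (Z3), with $Z_0=X_0$ by (Z4). Thus $X$ and $Z$ are two solutions of $\dot Y_t=v^X(Y_t,t)$ sharing an initial condition, and the uniqueness of the flow of characteristics in Definition~\ref{def:rectifiability} forces $Z_t=X_t$ for all $t$, which is (iii).

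I expect the main obstacle to lie exactly in this last step: the identities $\dot X_t=v^X(X_t,t)$ and $\dot Z_t=v^X(Z_t,t)$ hold only almost surely and for almost every $t$ (Lemma~\ref{lem:non_intersection_equiv} is a per-time statement), so some care is needed to promote them, via Fubini, to an almost-everywhere statement in $(t,\omega)$ and then to an almost-sure pathwise coincidence using the unique-characteristics clause. Notably, the asymmetry that makes the equivalence nontrivial is that $Z$ follows the marginal field $v^X$ by construction (this is the mechanism behind Theorem~\ref{thm:marginal_preservation}), whereas $X$ follows it pathwise \emph{only} under (iv); once that is secured, all remaining implications are either affine algebra or direct citations of Theorem~\ref{thm:convex_cost} and Lemma~\ref{lem:non_intersection_equiv}.
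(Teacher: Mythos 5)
Your proposal is correct and follows essentially the same route as the paper's proof: the nontrivial links are identical, namely the equality clause of Theorem~\ref{thm:convex_cost} (via the tightness of the conditional Jensen step and Lemma~\ref{lem:non_intersection_equiv}) to connect (i) and (iv), and the uniqueness of characteristics from rectifiability to get $(iv)\Rightarrow(iii)$ by viewing $X$ and $Z$ as two solutions of $\dot Y_t=v^X(Y_t,t)$ with $Y_0=X_0=Z_0$. The only differences are organizational (you use (iv) as a hub where the paper closes the cycle $(iii)\Rightarrow(ii)\Rightarrow(i)\Rightarrow(iv)\Rightarrow(iii)$) and your explicit flagging of the almost-sure/almost-every-$t$ bookkeeping, which the paper leaves implicit.
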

Conclusively, for Flow Matching, the objective of learning straight generation trajectories both implies and is ensured by the condition $V\big((X_0, X_1)\big) = 0$. 
However, this objective is unattainable for Flow Matching because the learned \emph{marginal velocity} in Eq.~\eqref{eq:marginal_cond_vel} is the conditional expectation of the \emph{conditional velocity} $\Delta^X$. As a result, $V\big((X_0, X_1)\big) \neq 0$, caused by the intersections of multiple linear interpolations defined in Eq.~\eqref{eq:linear_interpolation} at $X_t$ under the independent coupling $\rho(x_0, x_1) = \rho_0(x_0)\,\rho_1(x_1)$.

\subsection{Vanishing Time Derivative Along \texorpdfstring{$X$}{X}}
To achieve $V\big((X_0,X_1)\big)=0$, we propose to minimize the time derivative of the learned velocity field: $\frac{d}{dt} v$.
\begin{definition}[Time derivative] \label{def:timederivative}
For a differentiable vector field \(v:\mathbb{R}^d\times[0,1]\to\mathbb{R}^d\), the time derivative along its characteristics is
\begin{equation}
\begin{aligned}
\frac{d}{dt} v(x,t) &= D_t v(x,t)\ := \frac{\partial v}{\partial t} \frac{dt}{dt} \ +\ \frac{\partial v}{\partial x} \frac{dx}{dt}   \\
&= \frac{\partial}{\partial t}v(x,t)\ +\ \big(\nabla_x v(x,t)\big)\,v(x,t)
\end{aligned}
\end{equation}
\end{definition}

\begin{theorem}[Straightness $\equiv$ vanishing time derivative along \(X\)]
\label{thm:material_derivative_equivalence}
See the proof in the Appendix.
Assume \(X\) is rectifiable and \(Z\) satisfies Definition~\ref{def:Z}.
Assume moreover that \(v^X\) is continuously differentiable in \((x,t)\). Then $V\big((X_0,X_1)\big)=0$, if and only if 
$D_t v^X(X_t,t)=0$, for \(t\in[0,1]\).
\end{theorem}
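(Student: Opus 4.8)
The plan is to read the material derivative of Definition~\ref{def:timederivative} as the acceleration of the generating ODE, and then tie \emph{both} sides of the claimed equivalence to the single geometric statement ``every characteristic of $v^X$ is an affine, constant-speed line.'' The starting observation is that along a characteristic, i.e.\ a trajectory of $\dot X_t=v^X(X_t,t)$, the chain rule (valid because $v^X$ is assumed $C^1$ in $(x,t)$) gives
\[
\ddot X_t=\frac{d}{dt}v^X(X_t,t)=\partial_t v^X(X_t,t)+\big(\nabla_x v^X(X_t,t)\big)\,v^X(X_t,t)=D_t v^X(X_t,t).
\]
Hence $D_t v^X(X_t,t)=0$ for all $t\in[0,1]$ is exactly the assertion that each trajectory satisfies $\ddot X_t=0$, i.e.\ is affine in $t$ and traversed at constant velocity.

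For the direction $V\big((X_0,X_1)\big)=0\Rightarrow D_t v^X=0$, I would invoke Lemma~\ref{lem:non_intersection_equiv}: vanishing of the non-intersection functional yields $\Delta^X=\mathbb{E}[\Delta^X\mid X_t]=v^X(X_t,t)$ almost surely. Then the linear interpolant $t\mapsto(1-t)X_0+tX_1$ has velocity $\dot X_t=X_1-X_0=\Delta^X=v^X(X_t,t)$, so it solves the generating ODE with initial condition $X_0$. By rectifiability (uniqueness of characteristics in Definition~\ref{def:rectifiability}) the ODE trajectory coincides with this interpolant, which is affine in $t$; therefore $\ddot X_t=0$ and, by the identity above, $D_t v^X(X_t,t)=0$.

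For the converse, $\ddot X_t=0$ forces each characteristic to be affine, $X_t=(1-t)X_0+t\,\phi_1(X_0)$, with velocity $v^X(X_t,t)=\phi_1(X_0)-X_0$ constant along the trajectory. Thus the characteristic flow is itself a linear, constant-velocity interpolant whose velocity field is $v^X$ and which starts at $X_0$; this is precisely the structure (Z1)--(Z4) of the straight interpolant $Z$ in Definition~\ref{def:Z}. I would identify this flow with $Z$ via uniqueness of characteristics to obtain the pathwise equality $Z_t=X_t$, and then appeal to the chain of equivalences in Theorem~\ref{thm:equivalences}---specifically (iii)$\Leftrightarrow$(iv)---to conclude $V\big((X_0,X_1)\big)=0$. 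Marginal preservation (Theorem~\ref{thm:marginal_preservation}) is what guarantees that the straight characteristic flow transports $\mathrm{Law}(X_0)$ to the correct marginals, so that the endpoint it produces is consistent with the coupling $(X_0,X_1)$.

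The main obstacle I anticipate is bookkeeping the two distinct roles played by the symbol $X$: the linear interpolant $(1-t)X_0+tX_1$, which defines $v^X$ and the functional $V$, versus the characteristic/ODE flow, along which the material derivative is evaluated. The crux is to show these coincide exactly when the equivalence holds, and the tool for that is uniqueness of characteristics from rectifiability together with marginal preservation. A secondary technical point is justifying the chain-rule identity and the passage between the open interval $(0,1)$ appearing in Lemma~\ref{lem:non_intersection_equiv} and the closed interval $[0,1]$ in the statement, which I would handle by continuity of $v^X$ up to the endpoints.
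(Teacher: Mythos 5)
Your proof is correct and follows essentially the same route as the paper's: both directions reduce to the identity $D_t v^X(X_t,t)=\ddot X_t$ along characteristics, with the forward direction using $V=0\Rightarrow \Delta^X=v^X(X_t,t)$ (constant along paths) and the converse using straightness of characteristics plus uniqueness from rectifiability, marginal preservation, and the equivalences of Theorem~\ref{thm:equivalences}. Your explicit attention to distinguishing the linear interpolant from the ODE characteristic flow is a point the paper's own proof treats more casually, but it does not change the argument.
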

Nevertheless, directly minimizing $D_t v^X(X_t, t)$ conflicts with the Flow Matching objective in Eq.~\eqref{eq:marginal_cond_vel}, since the intersections produced by the independent coupling make the marginal velocity fundamentally incompatible with straight-trajectory learning. Existing methods, as reviewed in the Related Works section, ultimately aim for the same condition $V((X_0, X_1)) = 0$, which is equivalent to enforcing $D_t v^X(X_t, t) = 0$. However, because they operate within this inherently contradictory formulation, their effectiveness remains fundamentally limited.


\subsection{Variational Flow Matching for Straightness}



\begin{figure*}[htbp]
  \centering
  \setlength{\tabcolsep}{0pt}
  \begin{tabular}{ccccc}
    \subcaptionbox{\scriptsize Ground Truth\strut}[0.198\linewidth]{%
      \includegraphics[width=\linewidth]{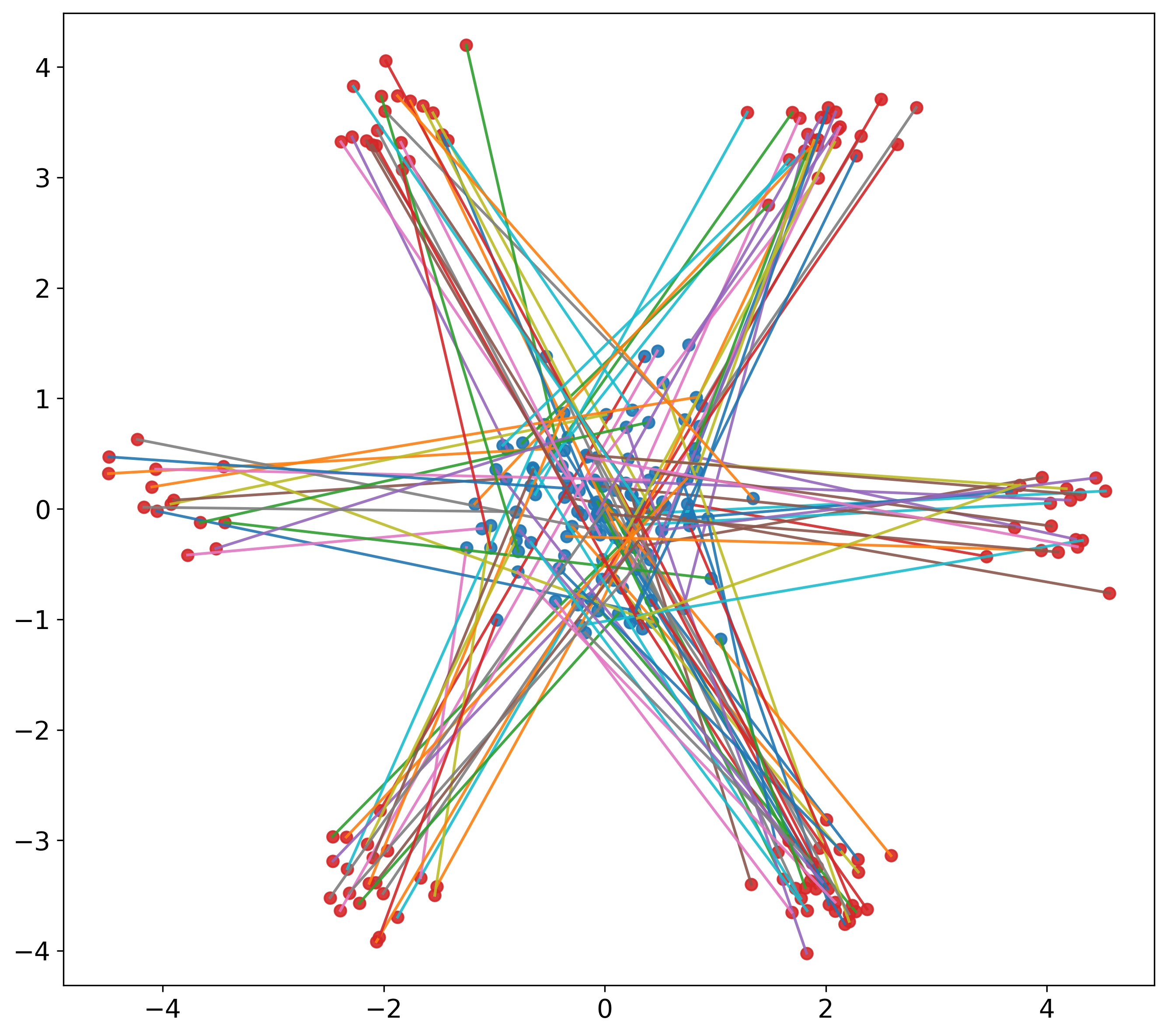}} &
    
    \subcaptionbox{\scriptsize Flow Matching\strut}[0.198\linewidth]{%
      \includegraphics[width=\linewidth]{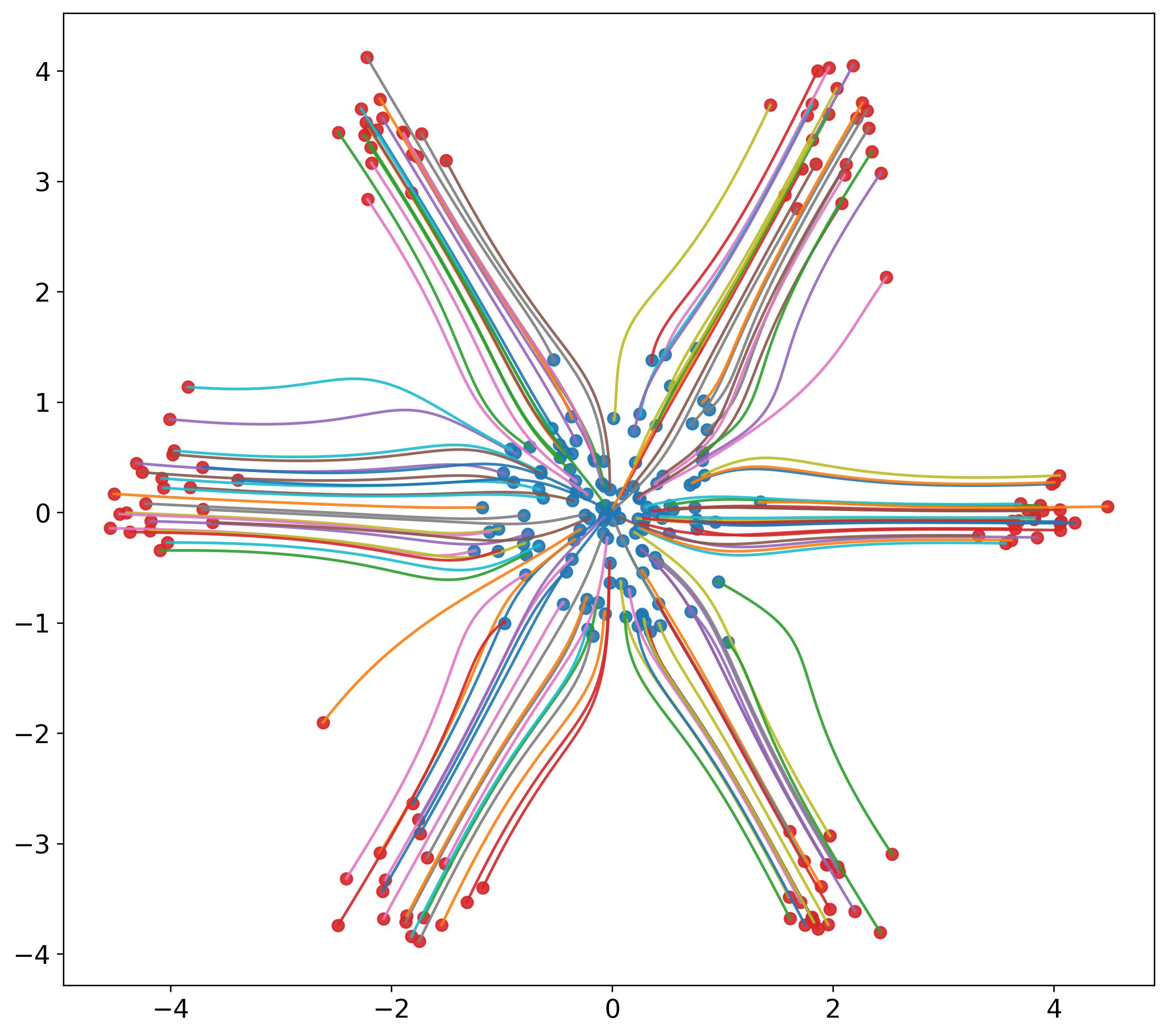}} &
    
    \subcaptionbox{\scriptsize Rectified Flow\strut}[0.198\linewidth]{%
      \includegraphics[width=\linewidth]{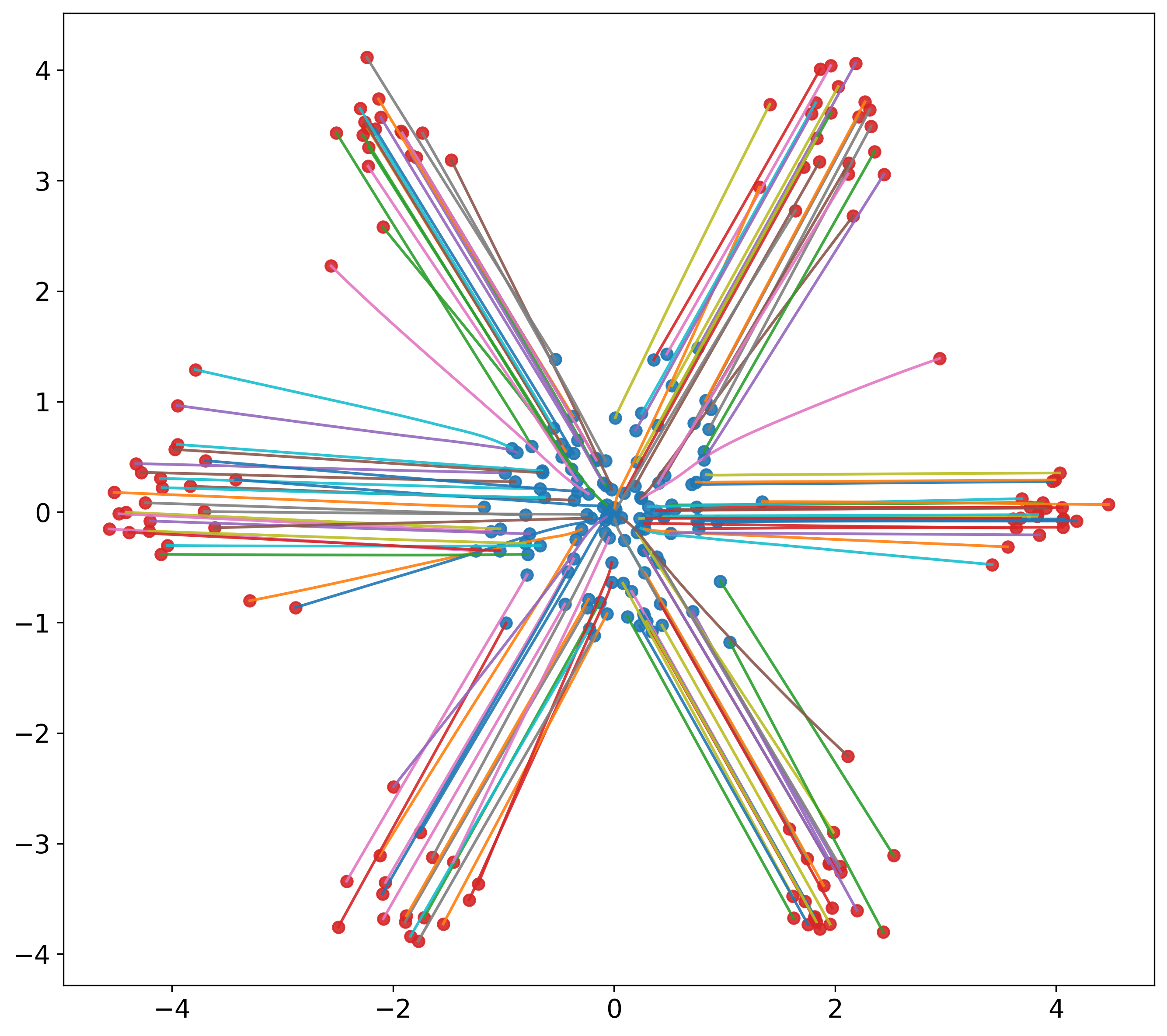}} &
    
    \subcaptionbox{\scriptsize Variational Flow Matching\strut}[0.198\linewidth]{%
      \includegraphics[width=\linewidth]{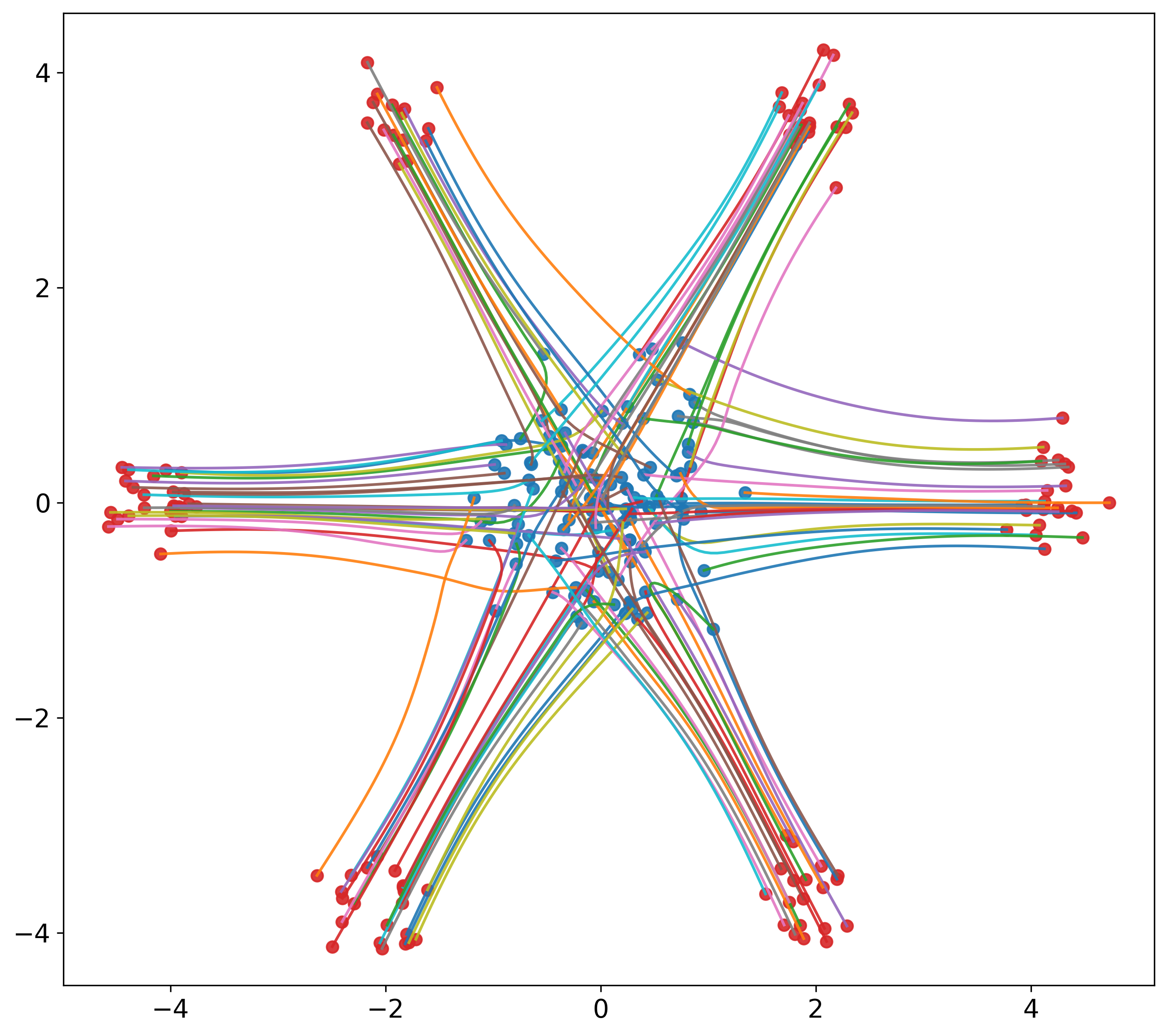}} &
    
    \subcaptionbox{\centering\scriptsize Straight Variational Flow Matching\strut}[0.198\linewidth]{%
      \includegraphics[width=\linewidth]{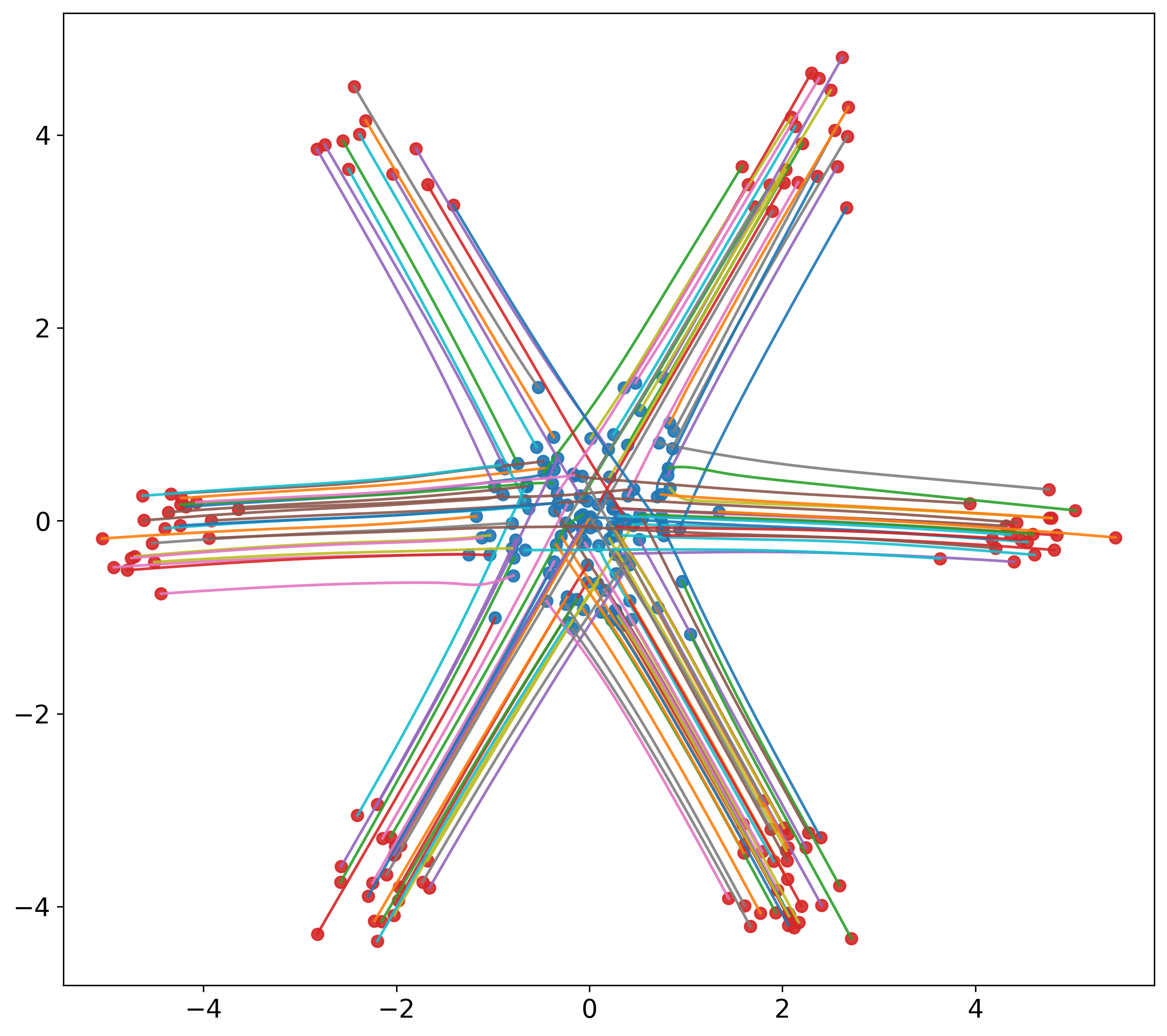}}
  \end{tabular}
  \vspace{-1em}
  \caption{\textbf{Generation Trajectory Visualization in the 2D Synthesized Hexagonal Dataset.}}
  \label{fig:viz_2D_hex}
\end{figure*}

Instead of these existing methods, we introduce Variational Flow Matching with a straightness objective to enforce vanishing time derivative along trajectories, thereby achieving $V\big((X_0, X_1)\big)=0$.

The generation process of Flow Matching is basically a Markov process~\cite{norris1998markov} that predicts the velocity $v^X(X_{t_i}, t_i)$ at time $t_i$ using the current sample $X_{t_i}$ to obtain the next sample $X_{t_{i+1}}$ at time $t_{i+1}$, as shown in Eq.~\eqref{eq:ODE_simulation}. This procedure implies that the model lacks awareness of the ``generation overview'' of the trajectories, which is a key reason why interpolants intersect and the learned ODE trajectories become curved. 
Variational Flow Matching (VFM)~\cite{guo2025variational}, parameterized by $\theta$, incorporates a latent code $z$ obtained via a variational autoencoder (VAE) parameterized by $\phi$:
\begin{equation}
\begin{aligned}
q_\phi(z &\big| X_0,X_1,X_t,t) = \\ &\mathcal{N}(z; \mu_\phi(X_0,X_1,X_t,t), \sigma_\phi(X_0,X_1,X_t,t)),
\end{aligned}
\end{equation}
which involves the ``generation overview'' information \((X_0, X_1)\). To predict the velocity \(v(X_t, t, z)\), we minimize:
\begin{equation}
\label{eq:loss_vfm}
\begin{aligned}
\mathcal L_{\mathrm{VFM}}(\theta, \phi)
\;=\;
\mathbb{E}&\left[\big\|v_\theta(X_t,t,z) - \Delta^X \big\|^2 \right] + \\
&\beta D_{KL}\big(q_\phi(z \big| X_0,X_1,X_t,t) \big| p(z)\big),
\end{aligned}
\end{equation}
where KL divergence term prevent the posterior distribution $q_\phi(z \big| X_0,X_1,X_t,t)$ far from prior $p(z)=\mathcal{N}(z; 0, I)$, and its strength is governed by the hyperparameter $\beta$~\cite{burgess2018understanding}.

We do not seek to achieve non-intersection for $X$ as in Theorem~\ref{thm:equivalences}. Instead, by leveraging the ``generation overview'' information provided by $z$, we aim to obtain a flow model $v_\theta(X_t, t, z)$ that can distinguish ``where to go'' even when interpolants are intersected for $X$. 
Furthermore, to achieve $V\big((X_0, X_1)\big)=0$, we minimize $D_t v_\theta(X_t, t, z)$. Note that, since VFM incorporates the latent code $z$, it can handle trajectory intersections; consequently, $Z$ and $X$ become compatible with each other under independent couplings $\rho(x_0, x_1) = \rho_0(x_0)\,\rho_1(x_1)$ for VFM. This is a key distinction from Flow Matching. See Figure~\ref{fig:viz_2D_hex}.
The straightness objective $D_t v(X_t,t,z)$ extends Definition~\ref{def:timederivative}: 
\begin{equation}
\label{eq:Dt_v}
\begin{aligned}
D_t v &(X_t,t,z) = \partial_{X_t}v \cdot \frac{d X_t}{dt} + \partial_{t}v \cdot \frac{d t}{dt} + \partial_{z}v \cdot \frac{d z}{dt} \\
&= \partial_{X_t}v \cdot v^X(X_t, t, z) + \partial_{t}v \cdot 1 + \partial_{z}v \cdot \frac{d z}{dt}
\end{aligned}
\end{equation}
where $\frac{d X_t}{dt} = v^X(X_t, t, z)$ extending from Eq.~\eqref{eq:ODE}. 
Further, calculate $\frac{d z}{dt}$ in the same way:
\begin{equation}
\label{eq:Dt_z}
\begin{aligned}
\frac{d z}{dt} &= \partial_{X_0}z \cdot \frac{d X_0}{dt} + \partial_{X_1}z \cdot \frac{d X_1}{dt} + \partial_{X_t}z \cdot \frac{d X_t}{dt} + \partial_{t}z \cdot \frac{d t}{dt} \\
&= \partial_{X_0}z \cdot 0 + \partial_{X_1}z \cdot 0 + \partial_{X_t}z \cdot v^X(X_t, t, z) + \partial_{t}z \cdot 1
\end{aligned}
\end{equation}
The time derivative shown in Eq.~\eqref{eq:Dt_v} \eqref{eq:Dt_z} is given by the Jacobian-vector product (JVP) between the Jacobian matrix of each function and the corresponding tangent vector. For code implementation, modern libraries such as PyTorch provide efficient JVP calculation interfaces. Different from previous studies~\cite{geng2025mean} using \texttt{torch.func.jvp}, which perform in forward mode and do not retain the computation graph for backward passes, \texttt{torch.autograd.functional.jvp} instead retains the graph, thus suitable for our method.
In practice, the velocity field $v(X_t,t,z)$ is parameterized as $v_\theta(X_t,t,z)$, and we follow \cite{lipman2023flow} to replace the marginal velocity $v^X(X_t, t, z)$ with the conditional velocity $\Delta^X$.
Combining with Eq.~\eqref{eq:Dt_v} \eqref{eq:Dt_z}, the straightness objective is achieved by minimizing:
\begin{equation}
\label{eq:loss_s}
\begin{aligned}
\mathcal L_{\mathrm{S}}(\theta, \phi) \!=\! \left[\big\|\partial_{X_t}v \!\cdot\! \Delta^X \!+ \partial_{t}v + \partial_{z}v \!\cdot\! (\partial_{X_t}z \!\cdot\! \Delta^X \!+ \partial_{t}z)\big\|^2 \right] 
\end{aligned}
\end{equation}

Thus, the total loss function is formed by Eq.~\eqref{eq:loss_vfm} \eqref{eq:loss_s}:
\begin{equation}
\label{eq:total_loss}
\begin{aligned}
\mathcal L(\theta, \phi) = \mathcal L_{\mathrm{VFM}}(\theta, \phi) + \alpha \mathcal L_{\mathrm{S}}(\theta, \phi)
\end{aligned}
\end{equation}
where the hyperparameter $\alpha$ controls the relative strength of the two components of the loss. See the hyperparameter $\alpha, \beta$ analysis in the Experiment section.
Here, the straightness objective is incorporated into the VFM objective to learn straight generation trajectories, and the latent code $z$ captures the ``generation overview".

In the inference stage, a single latent code $z$ sampled from the prior $p(z)$ is used throughout the generation process from time $t=0$ to $t=1$, akin to Eq.~\eqref{eq:ODE_simulation}:
\begin{equation}
\label{eq:ODE_simulation_VFM}
X_{t_{i+1}} = X_{t_i} + (t_{i+1} - t_i) \, v_\theta(X_{t_i}, t_i, z)
\end{equation}
Since straight trajectories are learned, fewer simulation steps are required compared with Flow Matching in Eq.~\eqref{eq:ODE_simulation}.

\section{Experiment}
\label{sec:experiment}

\subsection{Synthetic Data}

\begin{figure*}[htbp]
  \centering
  \setlength{\tabcolsep}{0pt}
  \begin{tabular}{ccc}
    \subcaptionbox{Flow Matching\strut}[0.3\linewidth]{%
      \includegraphics[width=\linewidth]
      {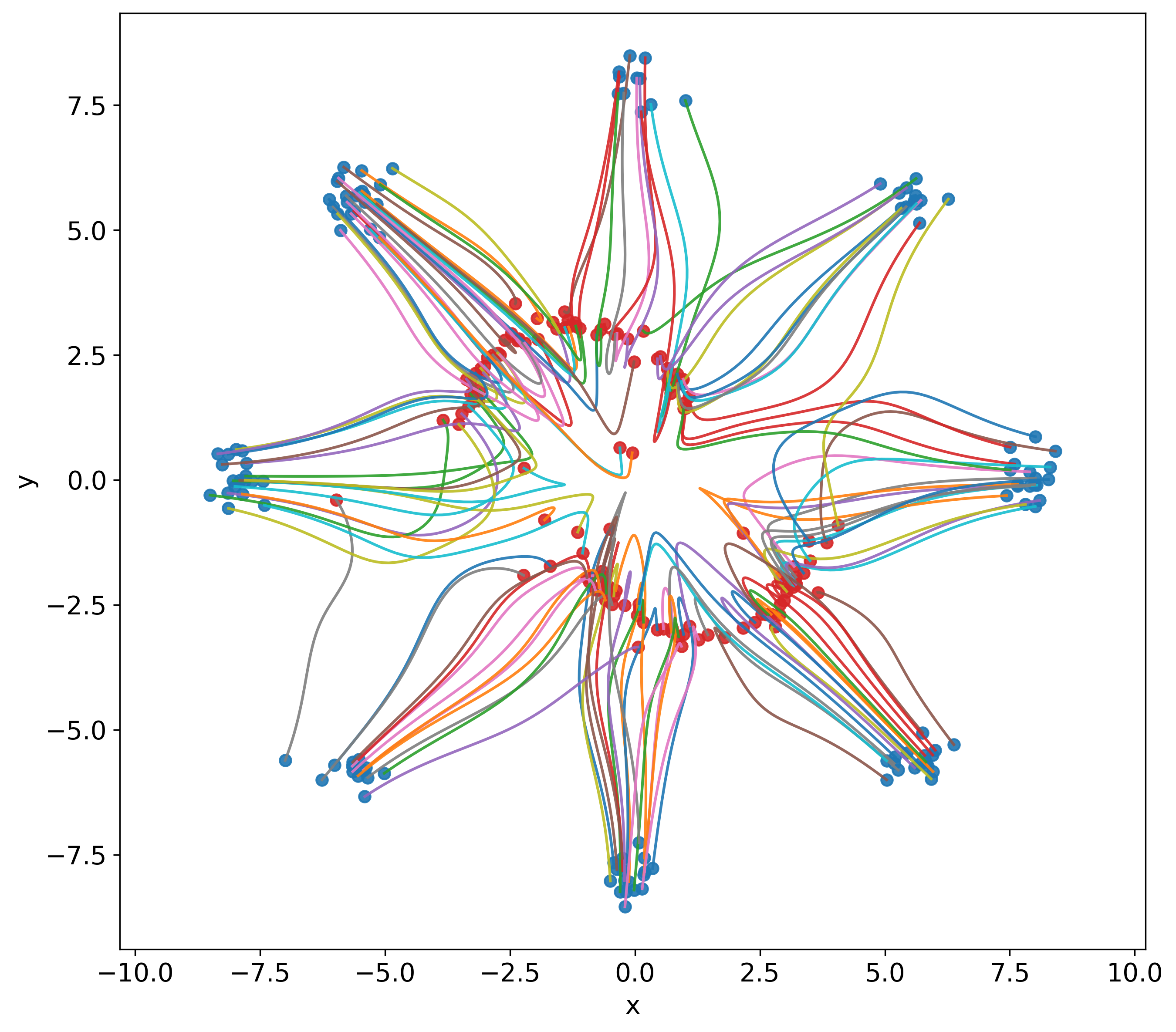}} &
    \subcaptionbox{Rectified Flow\strut}[0.3\linewidth]{%
      \includegraphics[width=\linewidth]{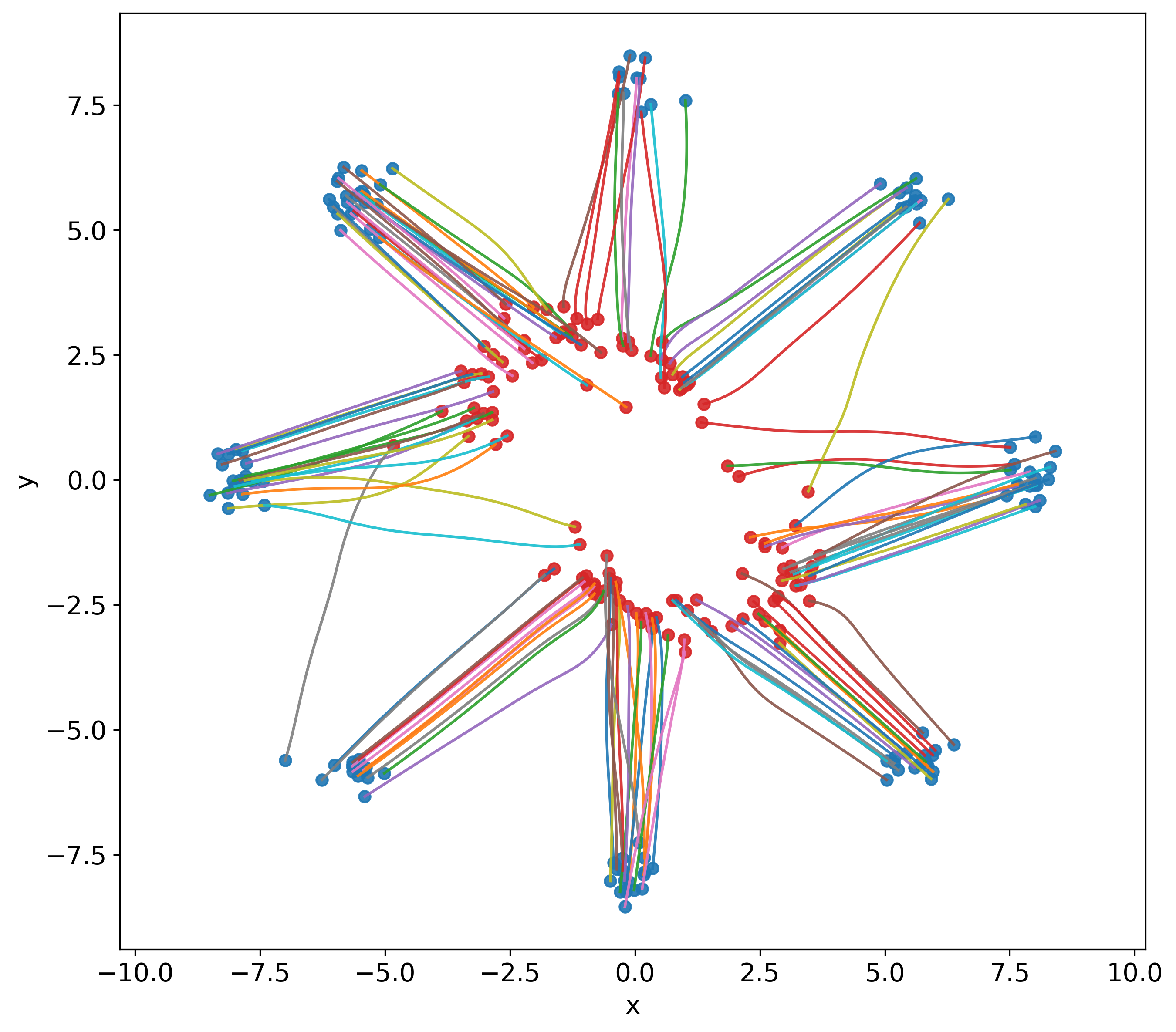}} &
    \subcaptionbox{VFM\strut}[0.3\linewidth]{%
      \includegraphics[width=\linewidth]{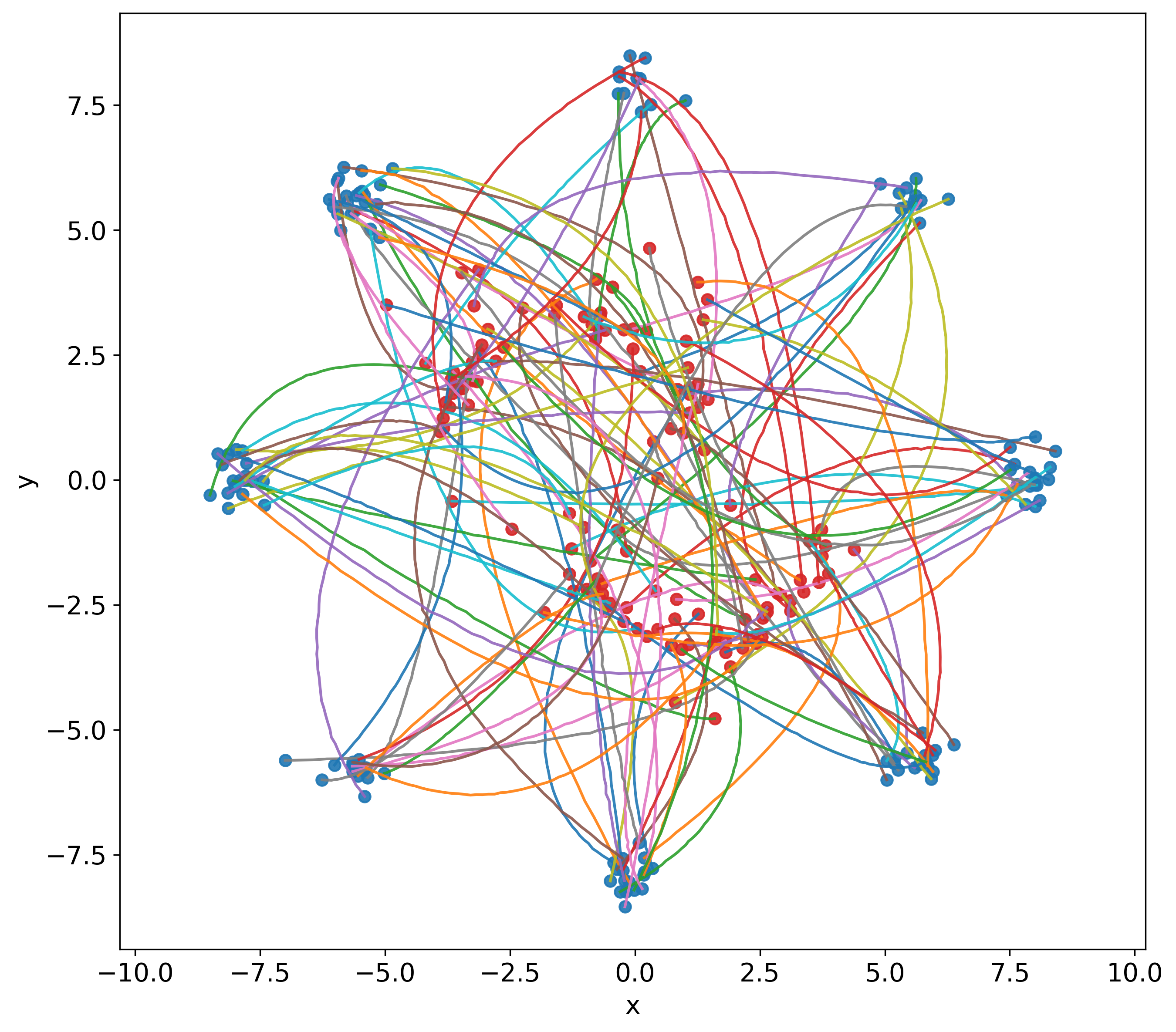}} \\[0pt]
    \subcaptionbox{Consistency Model\strut}[0.3\linewidth]{%
      \includegraphics[width=\linewidth]{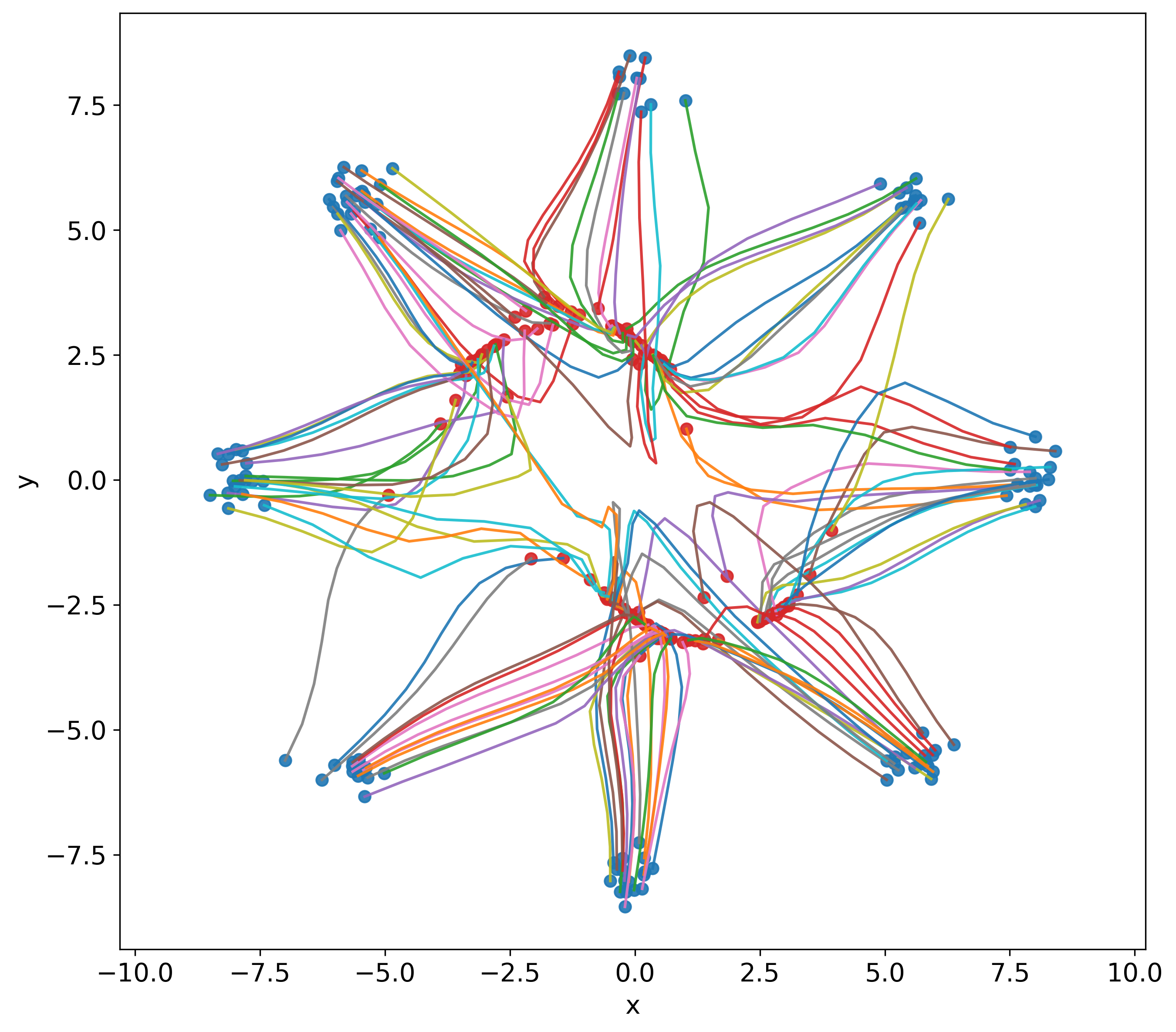}} &
    \subcaptionbox{MeanFlow\strut}[0.3\linewidth]{%
      \includegraphics[width=\linewidth]{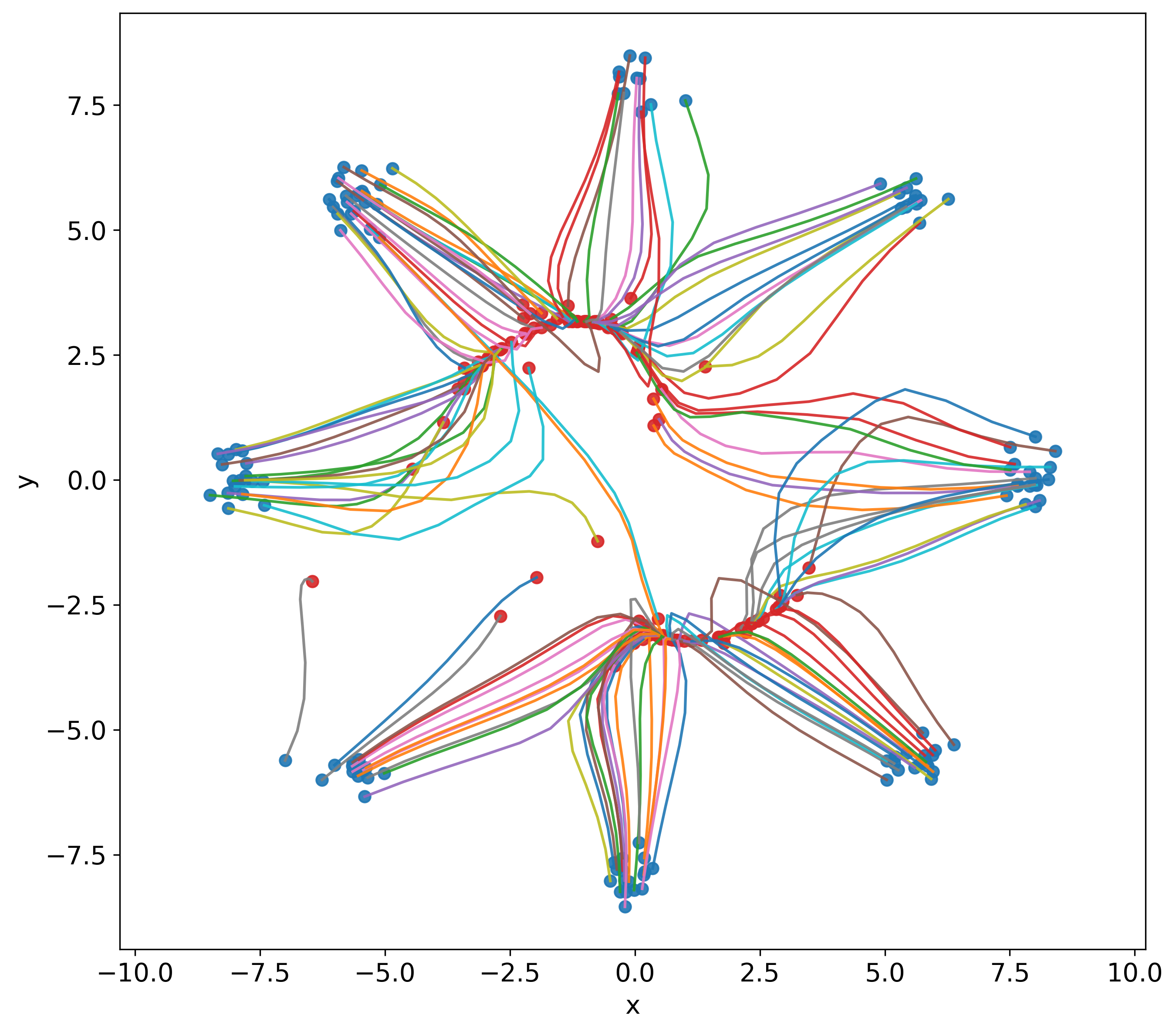}} &
    \subcaptionbox{S-VFM (Ours)\strut}[0.3\linewidth]{%
      \includegraphics[width=\linewidth]{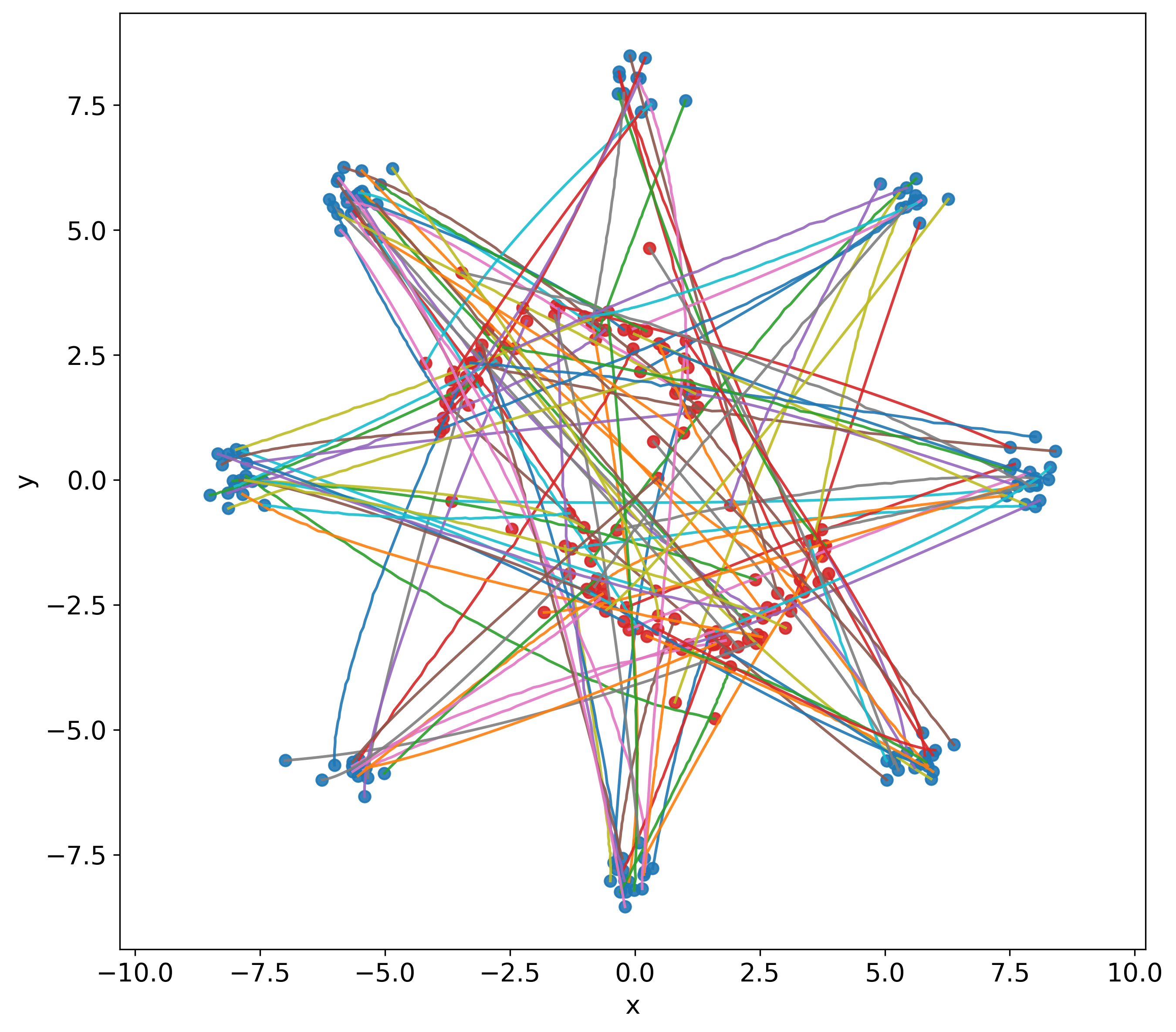}} 
  \end{tabular}
  \vspace{-1em}
  \caption{\textbf{Generation Trajectory Visualization in the 2D Synthesized Eight-Gaussians-to-Moon Dataset.}}
  \label{fig:viz_2D_moon}
\end{figure*}

Synthetic data experiments are conducted to visualize the results and intuitively demonstrate the effectiveness of the proposed method.
First, we simulate and visualize the generation trajectories of different methods on a 2D hexagonal dataset, as shown in Figure~\ref {fig:viz_2D_hex}. In this dataset, the source distribution (blue points) is a simple Gaussian, while the target distribution (red points) consists of six Gaussian clusters arranged around the source. The visualization results are consistent with the discussion in the previous section: our goal is to learn a straight flow even when the trajectories intersect, which is achieved thanks to the introduction of the VFM framework and straightness objective.
Furthermore, we conduct experiments on a more complex synthetic dataset, as illustrated in Figure~\ref {fig:viz_2D_moon}. In this case, the source distribution (blue points) consists of eight Gaussian clusters surrounding the target distribution (red points), which forms an upper–lower moon shape. These results further demonstrate the robustness and generalization capability of our method in modeling complex nonlinear transformations.

\subsection{CIFAR-10}

\begin{figure*}[htbp]
  \centering
  \setlength{\tabcolsep}{0pt} 
  \begin{tabular}{cccc}
    \subcaptionbox{$X^1_0$, $z^1$, NFE = 1\strut}[0.248\linewidth]{%
      \includegraphics[width=\linewidth]{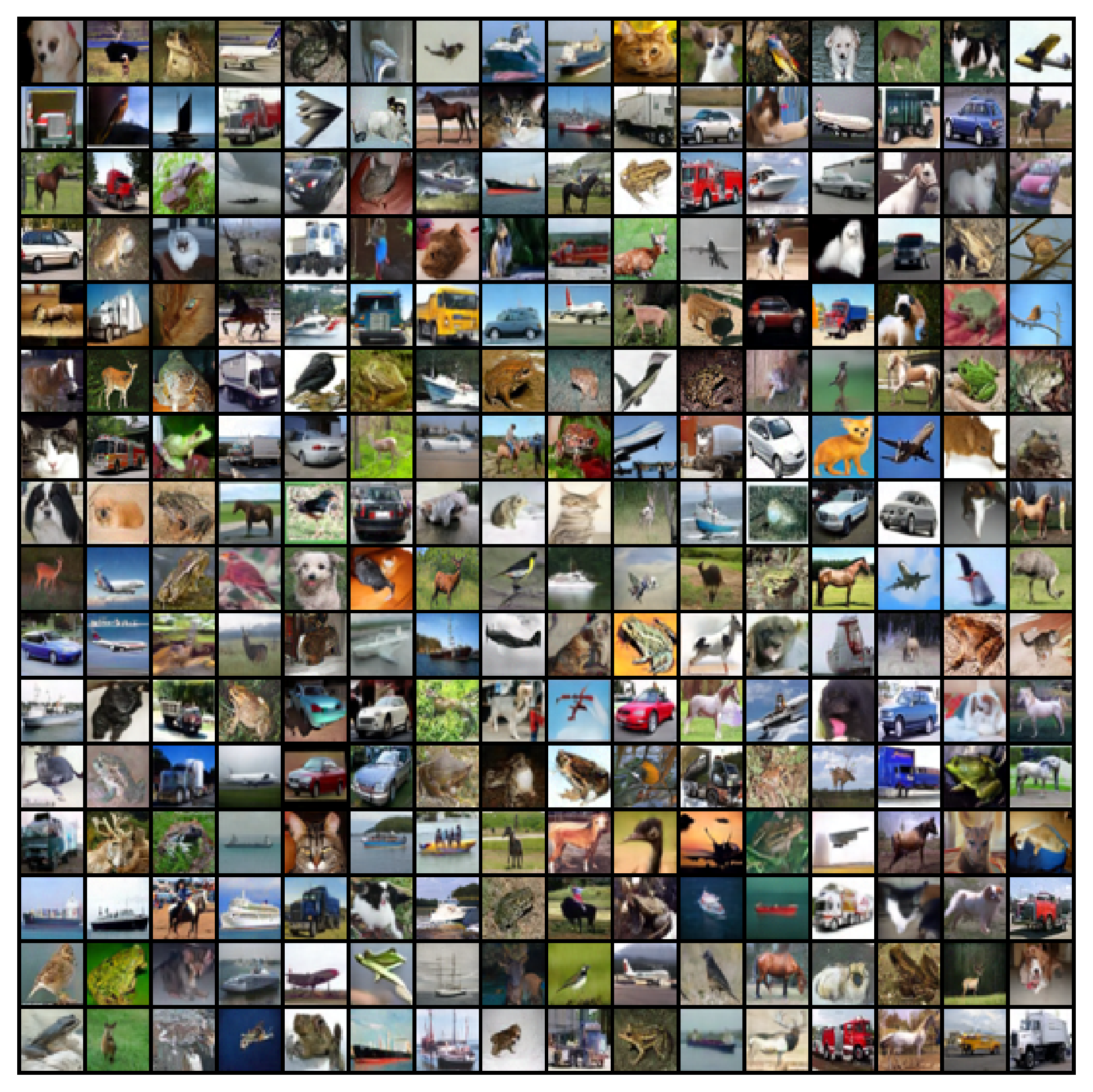}} &
    \subcaptionbox{$X^1_0$, $z^1$, NFE = 2\strut}[0.248\linewidth]{%
      \includegraphics[width=\linewidth]{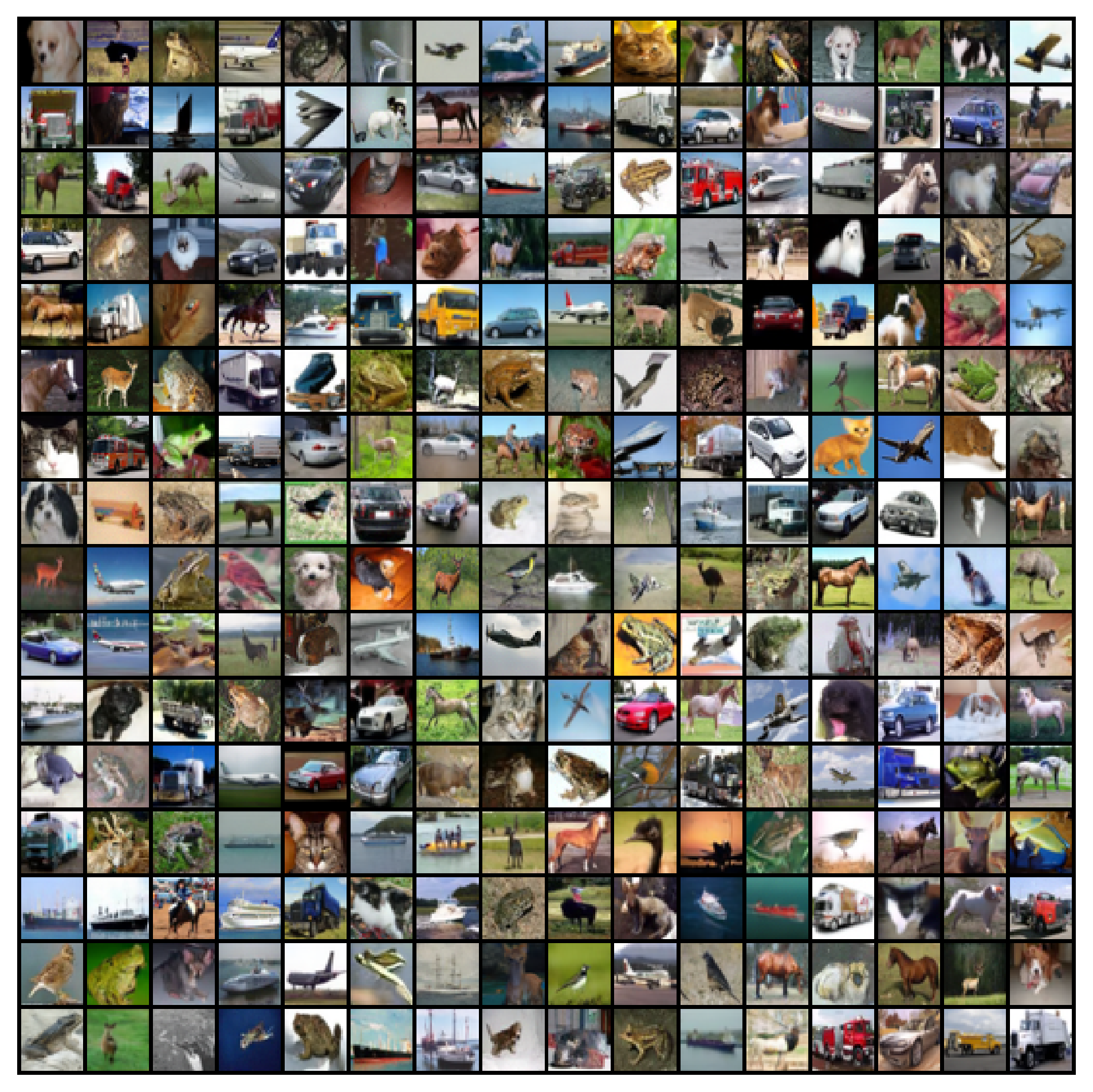}} &
    \subcaptionbox{$X^1_0$, $z^1$, NFE = 5\strut}[0.248\linewidth]{%
      \includegraphics[width=\linewidth]{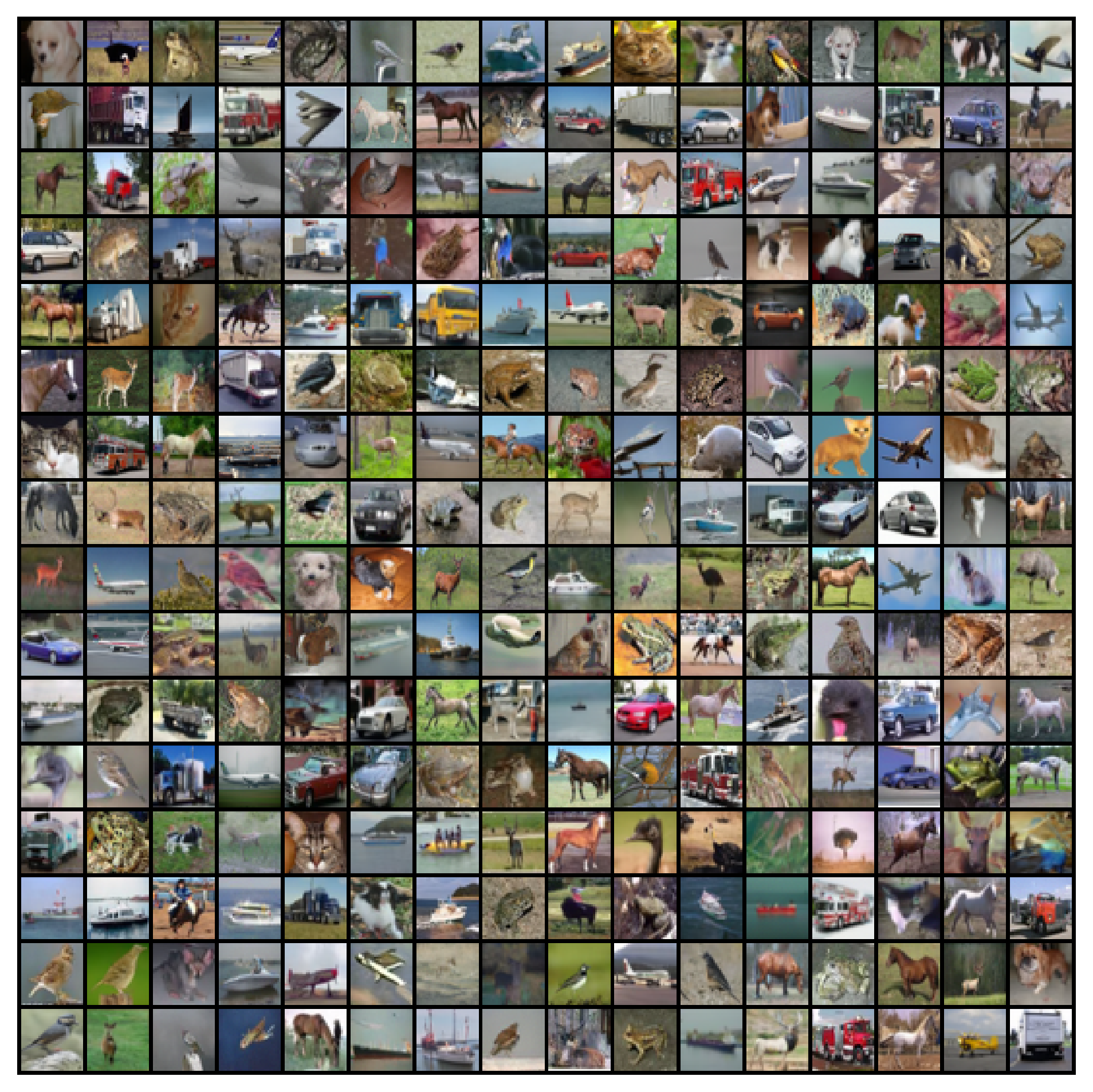}} &
    \subcaptionbox{$X^1_0$, $z^1$, NFE = 10\strut}[0.248\linewidth]{%
      \includegraphics[width=\linewidth]{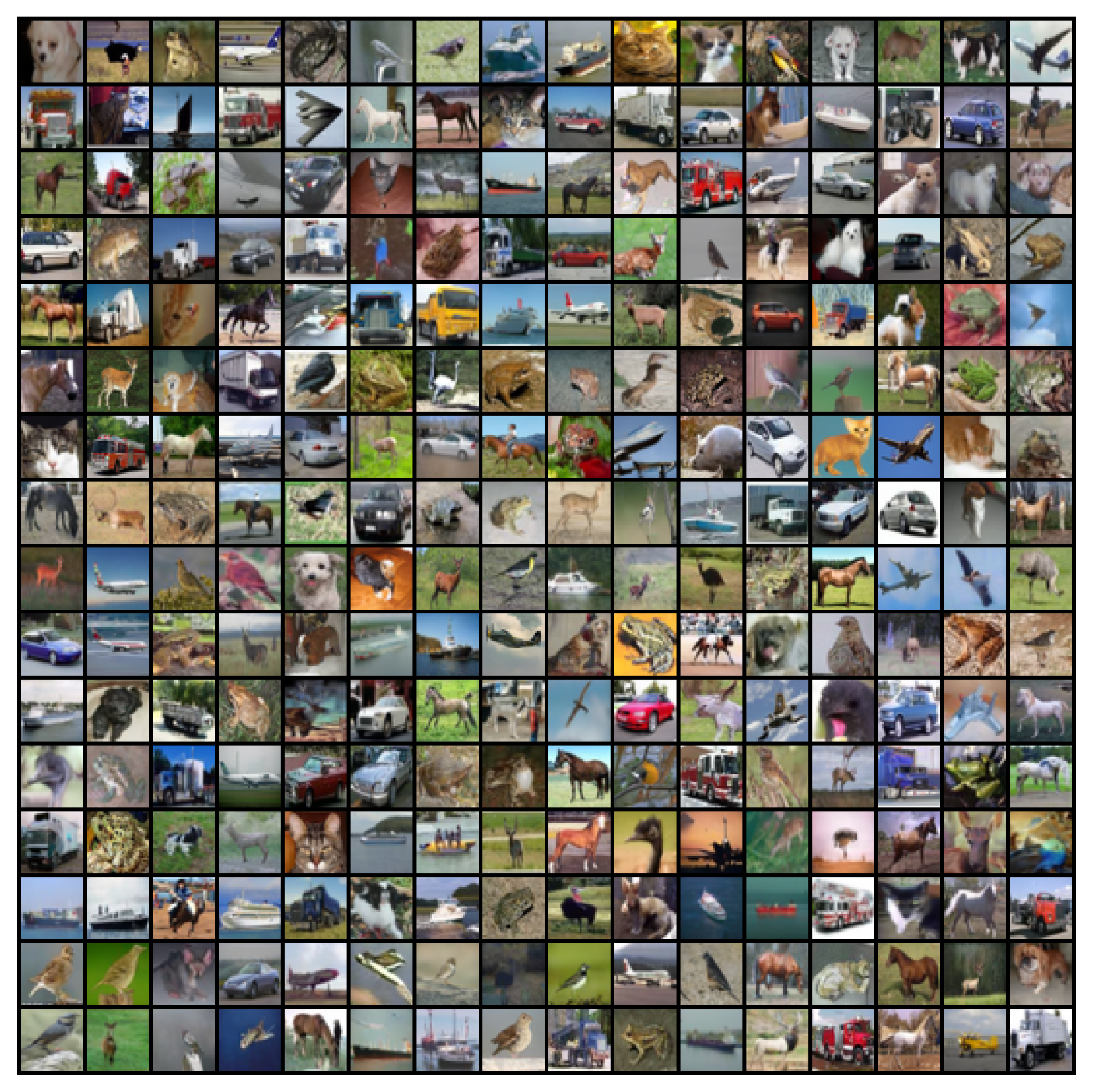}}
  \end{tabular}

  \vspace{2pt} 

  \begin{tabular}{cccc}
    \subcaptionbox{$X^2_0$, $z^2$, NFE = 1\strut}[0.248\linewidth]{%
      \includegraphics[width=\linewidth]{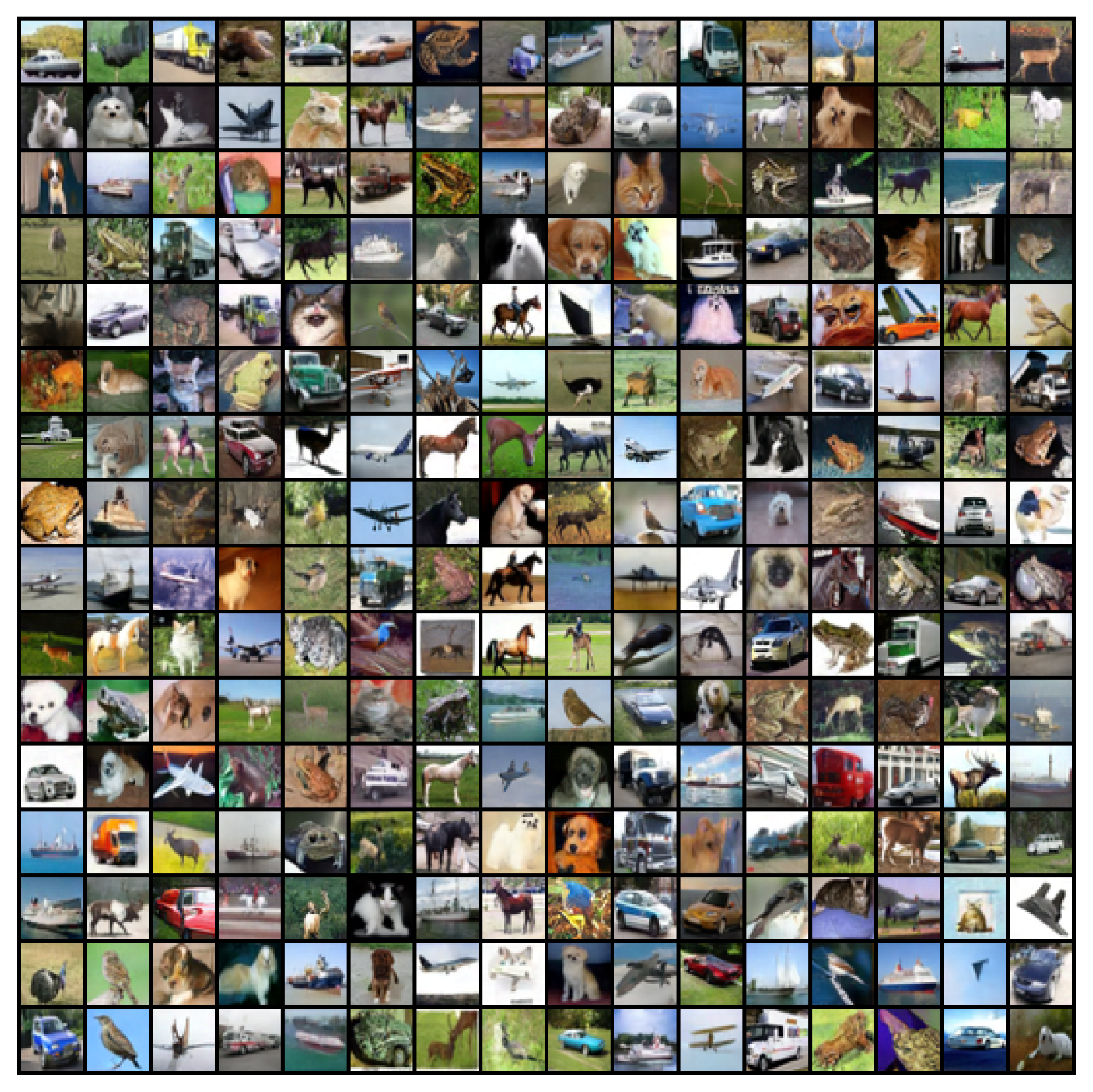}} &
    \subcaptionbox{$X^2_0$, $z^2$, NFE = 2\strut}[0.248\linewidth]{%
      \includegraphics[width=\linewidth]{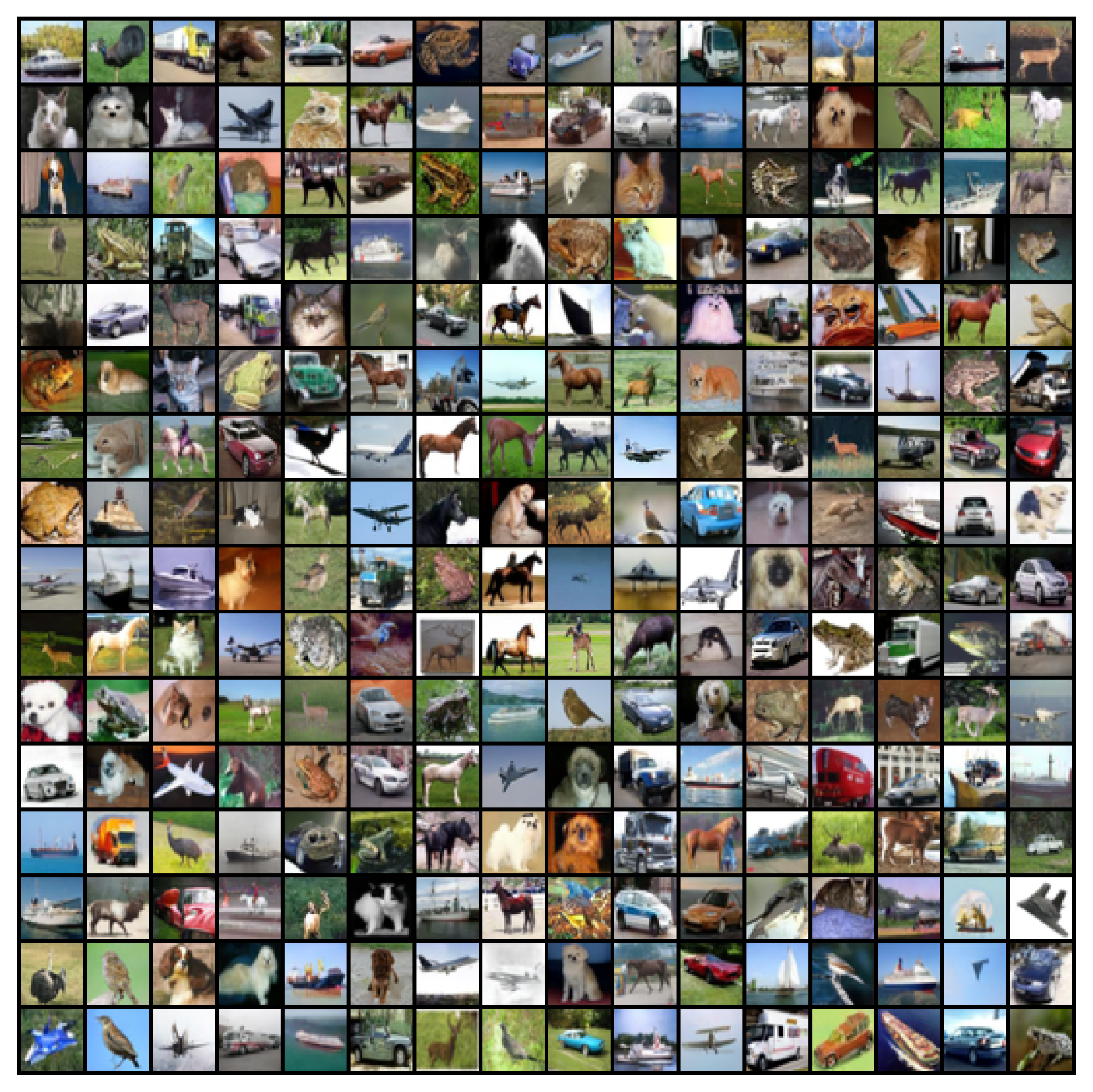}} &
    \subcaptionbox{$X^2_0$, $z^2$, NFE = 5\strut}[0.248\linewidth]{%
      \includegraphics[width=\linewidth]{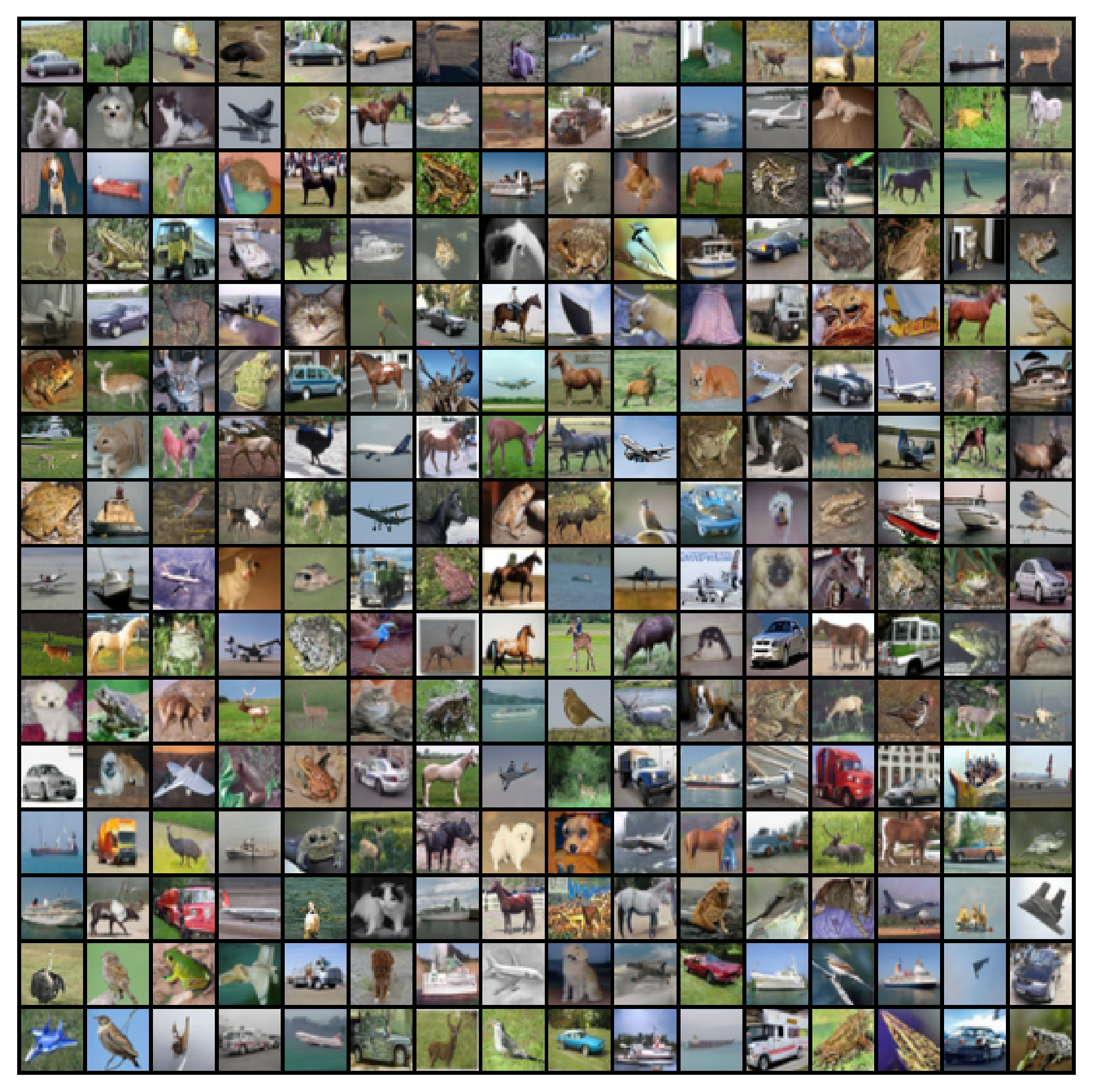}} &
    \subcaptionbox{$X^2_0$, $z^2$, NFE = 10\strut}[0.248\linewidth]{%
      \includegraphics[width=\linewidth]{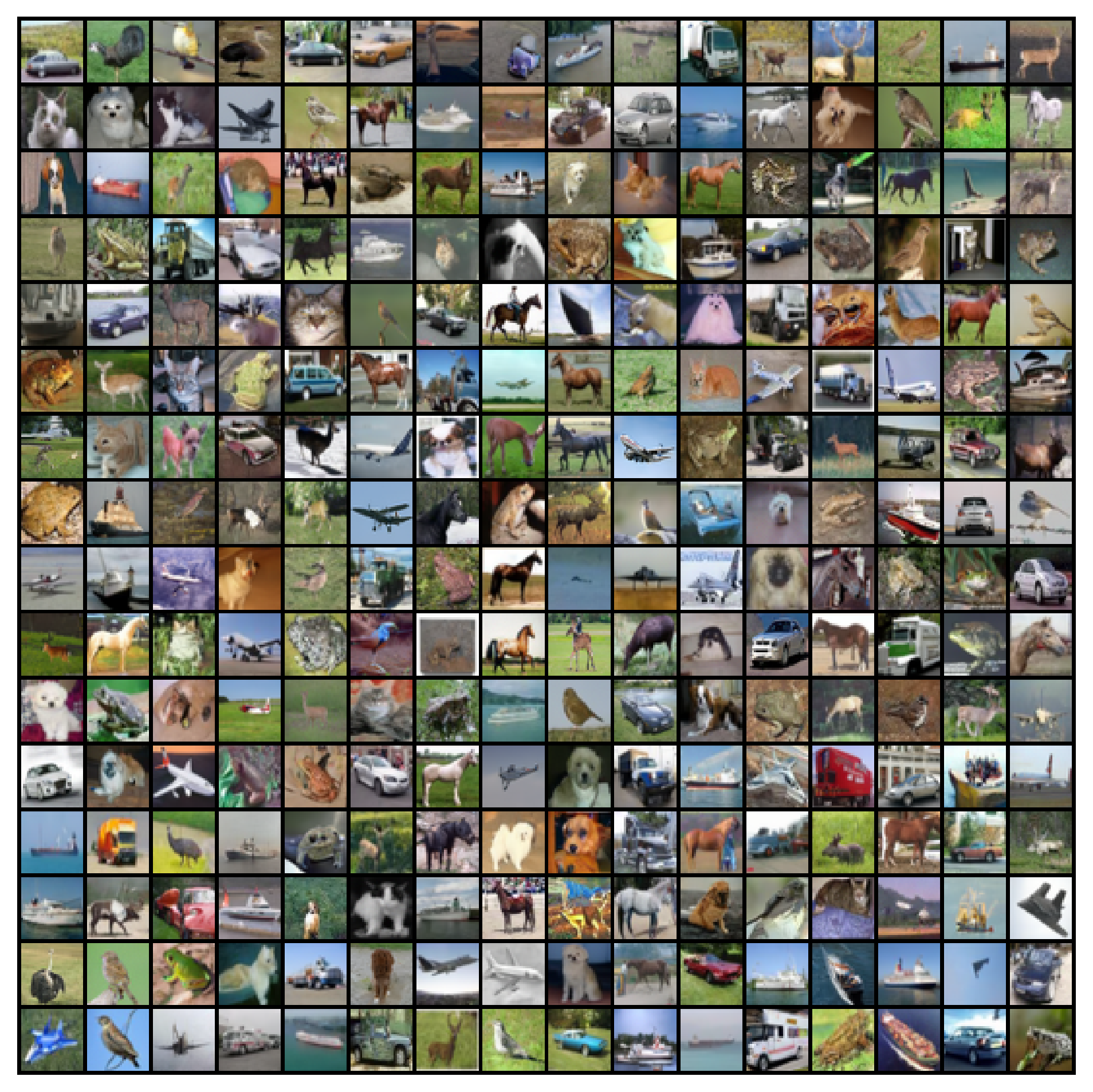}}
  \end{tabular}
\vspace{-1em}
\caption{\textbf{Randomly Selected Generation Results under Different NFE.} Each row corresponds to a distinct initial noise set ($X^1_0$ or $X^2_0$) and its associated latent code set ($z^1$ or $z^2$). 
Within each row, images at the same grid position across panels are generated from the same initial noise and latent code, while panels from left to right correspond to increasing NFE values of $[1, 2, 5, 10]$. Both the noise samples and latent codes are independently drawn from their prior distributions. 
}
  \label{fig:cifar10_NFE}
\end{figure*}

CIFAR-10 is a $32 \times 32$ resolution image dataset containing multiple classes, and is a widely used benchmark in prior work~\cite{krizhevsky2009learning}. For a fair evaluation, we use the same architecture and training paradigm of previous studies~\cite{guo2025variational, geng2025consistency}, but train the UNet model with the VFM framework with the straightness objective, as shown in Eq.~\eqref{eq:total_loss}.

The UNet $v_\theta$ consists of downsampling and upsampling residual blocks with skip connections, and a self-attention block added after the residual block at 16×16 resolution and in the middle bottleneck layer. The model takes both $X_t$, $t$ and $z$ as input, with the time embedding $t$ used to regress learnable scale and shift parameters for adaptive group norm layers.

The posterior model $q_\phi$ shares a similar encoder structure as $v_\theta$: image space inputs $[X_0,X_1,X_t]$ are concatenated along the channel dimension, while time $t$ is conditioned using adaptive group normalization. The network predicts $\mu_\phi$ and $\sigma_\phi$ with dimensions 768.
During training, the variational latent z is sampled from the predicted posterior $q_\phi$, and at test time, from a standard Gaussian prior $p(z)$. The latent is processed through two MLP layers and serves as a conditional signal for the velocity network $v_\theta$. We identify two effective approaches as conditioning mechanisms: \textit{adaptive normalization}, where $z$ is added to the time embedding before computing shift and offset parameters, and \textit{bottleneck sum}, which fuses the latent with intermediate activations at the lowest resolution using a weighted sum before upsampling.

\begin{figure*}[htbp]
  \centering
  \setlength{\tabcolsep}{1pt} 
  \begin{tabular}{cccc}
    \subcaptionbox{$X^1_0$, $z^1$, NFE = 1\strut}[0.248\linewidth]{%
      \includegraphics[width=\linewidth]{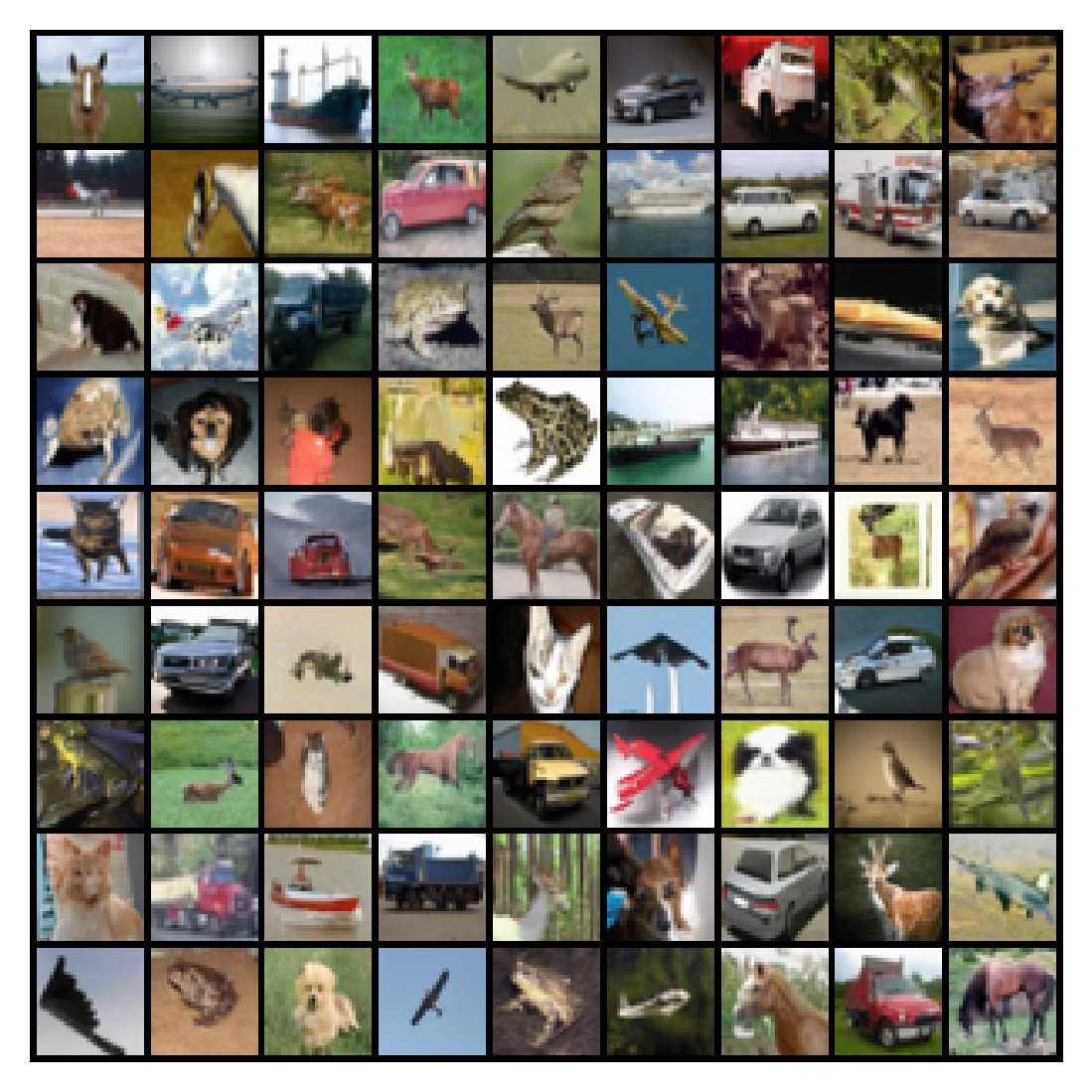}} &
    \subcaptionbox{$X^1_0$, $z^2$, NFE = 1\strut}[0.248\linewidth]{%
      \includegraphics[width=\linewidth]{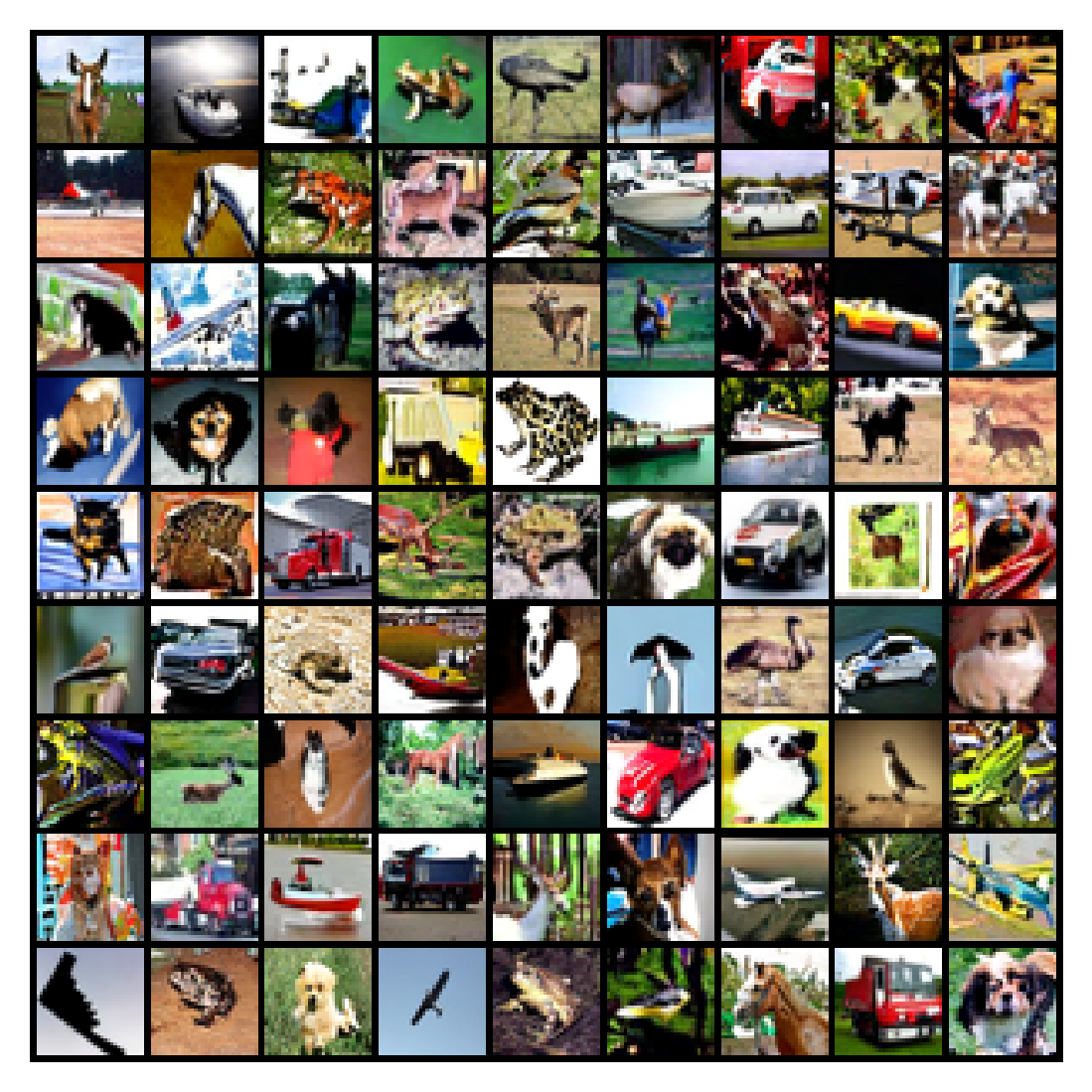}} &
    \subcaptionbox{$X^2_0$, $z^3$, NFE = 1\strut}[0.248\linewidth]{%
      \includegraphics[width=\linewidth]{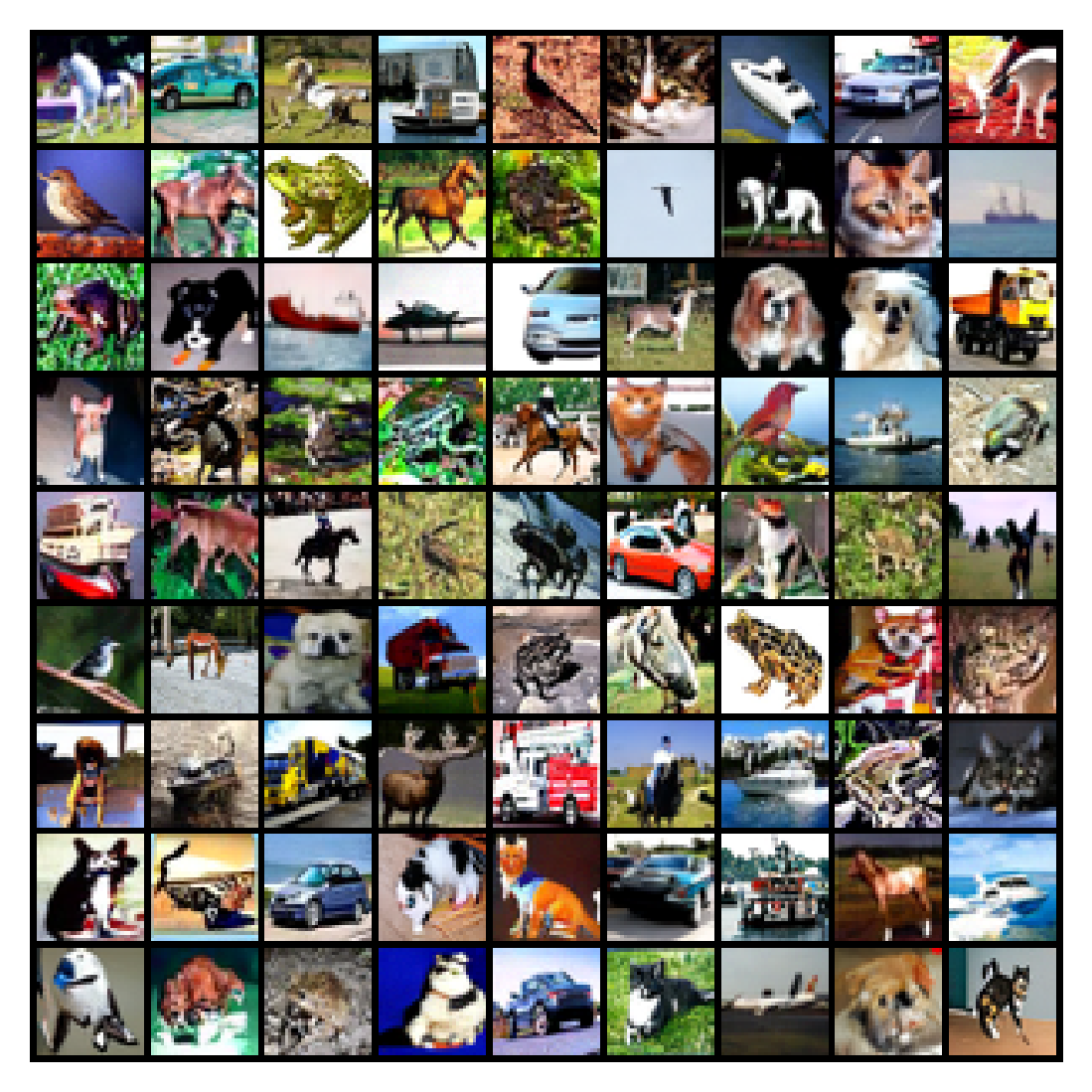}} &
    \subcaptionbox{$X^2_0$, $z^4$, NFE = 1\strut}[0.248\linewidth]{%
      \includegraphics[width=\linewidth]{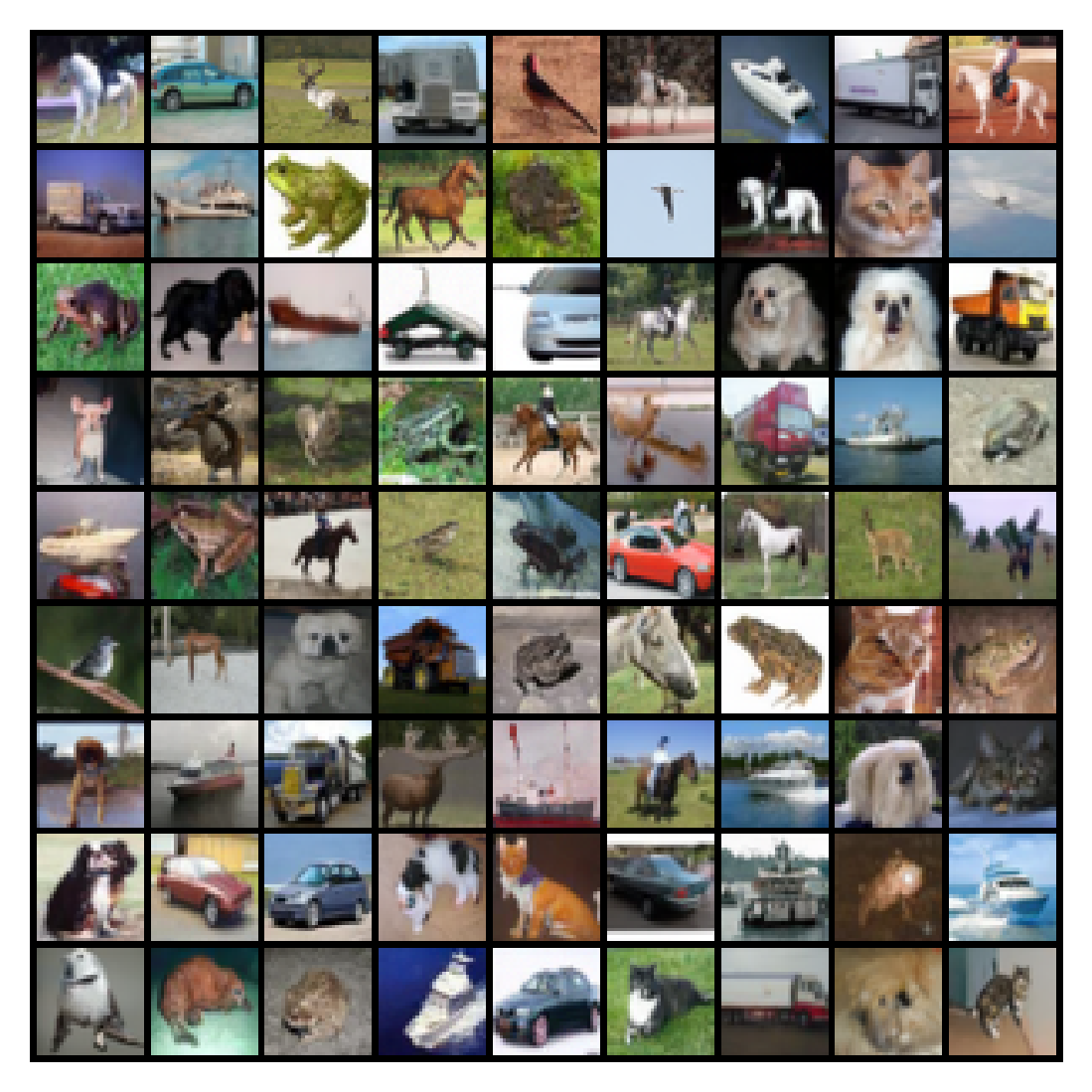}}
  \end{tabular}
  \vspace{-1em}
\caption{
\textbf{Generation results under the same initial noise but different latent codes.} 
Panels (a) and (b) are generated from the same initial noise set $X^1_0$ with different latent code sets $z^1$ and $z^2$, respectively, while panels (c) and (d) use a different initial noise set $X^2_0$ with corresponding latent code sets $z^3$ and $z^4$. 
All noise samples and latent codes are independently drawn from their prior distributions, and the number of function evaluations is fixed at $\text{NFE}=1$. 
}

  \label{fig:cifar10_different_z}
\end{figure*}

Figure~\ref{fig:cifar10_NFE} illustrates the generation results under different numbers of function evaluations (NFE), using two distinct initial noise sets, $X^1_0$ and $X^2_0$, along with their corresponding latent code sets, $z^1$ and $z^2$, respectively. 
Note that the noise samples and latent codes within each set are randomly generated following their prior distributions, as described in the Method section, and are independently sampled. 
Within each row, images at the same grid position across panels are generated from the same initial noise (belonging to $X^1_0$ or $X^2_0$) and the corresponding latent code (from $z^1$ or $z^2$). 
Each image panel in a row varies by the number of function evaluations, with NFE values of $[1, 2, 5, 10]$ from left to right. 
As the NFE increases, both the visual fidelity and structural consistency of the generated images improve. 
Notably, even the one-step generation (NFE $=1$) demonstrates competitive quality compared to multi-step generations, which is a key observation supporting the claim that the proposed method yields \textit{nearly straight generation trajectories} to support few-steps generation.

Figure~\ref{fig:cifar10_NFE} also demonstrates the effect of generation from the same latent code under various initial noises: each image panel contains samples generated from different initial noises while keeping the latent code fixed. 
Furthermore, to investigate the generative behavior under the same initial noise but different latent codes, we visualize two groups of images in Figure~\ref{fig:cifar10_different_z}. 
Images at the same grid position in panels (a) and (b) are generated from the same initial noise belonging to the noise set $X^1_0$, but with different corresponding latent codes from the latent code sets $z^1$ and $z^2$, respectively. 
The same configuration applies to panels (c) and (d), which differ by the initial noise set $X^2_0$ and latent code sets $z^3$ and $z^4$. 
As in the previous experiment, the noise samples and latent codes within each set are randomly drawn from their respective prior distributions, as described in the Method section, and are independently sampled. 
For all panels, the number of function evaluations is fixed at $\text{NFE}=1$. 
The results shown in Figure~\ref{fig:cifar10_different_z} indicate that images generated from the same initial noise but conditioned on different latent codes exhibit consistent color patterns and similar spatial layouts. 
However, variations in the latent codes can lead to changes in object class or the presence of different instances of the same class.

\begin{table*}[t]
\centering
\scriptsize
\setlength{\tabcolsep}{4pt}
\renewcommand{\arraystretch}{1}

\begin{adjustbox}{width=0.7\textwidth,center}
\begin{tabular}{c l c c c c c c}
\Xhline{4\arrayrulewidth}
 & \textbf{NFE / Sampler} & \textbf{\# Params.} & \textbf{1} & \textbf{2} & \textbf{5} & \textbf{10} & \textbf{Adaptive} \\
\hline\hline
\multirow{2}{*}{\rotatebox[origin=c]{90}{\tiny \makecell[c]{Flow}}}
& {Flow Matching~\cite{lipman2023flow} \textcolor{lightgray}{\scriptsize{[ICLR'23]}}} & 36.5M & - & 166.65 & 36.19 & 14.4 & 3.66 \\
& {VFM~\cite{guo2025variational} \textcolor{lightgray}{\scriptsize{[ICML'25]}}} & 60.6M & - & 97.83 & 13.12 & 5.34 & 2.49 \\
\hline
\multirow{3}{*}{\rotatebox{90}{\tiny \makecell[c]{Rectified\\Flow}}} 
& {1-Rectified Flow~\cite{liu2023flow} \textcolor{lightgray}{\scriptsize{[ICLR'23]}}} & 36.5M & 378 & 6.18 & - & - & 2.58 \\
& {2-Rectified Flow~\cite{liu2023flow} \textcolor{lightgray}{\scriptsize{[ICLR'23]}}} & 36.5M & 12.21 & 4.85 & - & - & 3.36 \\
& {3-Rectified Flow~\cite{liu2023flow} \textcolor{lightgray}{\scriptsize{[ICLR'23]}}} & 36.5M & 8.15 & 5.21 & - & - & 3.96 \\
\hline
\multirow{6}{*}{\rotatebox{90}{\tiny \makecell[c]{Mean Velocity/\\Consistency Models}}}
& {CT~\cite{song2023consistency} \textcolor{lightgray}{\scriptsize{[ICML'23]}}} & 61.8M & 8.71 & 5.32 & 11.412 & 23.948 & 40.677 \\
& {iCT~\cite{song2024improved} \textcolor{lightgray}{\scriptsize{[ICLR'24]}}} & ~55M & 2.83 & 2.46 & - & - & - \\
& {ECT~\cite{geng2025consistency} \textcolor{lightgray}{\scriptsize{[ICLR'25]}}} & ~55M & 3.60 & 2.11 & - & - & - \\
& {sCT~\cite{lu2025simplifying} \textcolor{lightgray}{\scriptsize{[ICLR'25]}}} & ~55M & 2.85 & 2.06 & - & - & - \\
& {IMM~\cite{zhou2025inductive} \textcolor{lightgray}{\scriptsize{[ICML'25]}}} & ~55M & 3.20 & 1.98 & - & - & - \\
& {MeanFlow~\cite{geng2025mean} \textcolor{lightgray}{\scriptsize{[NIPS'25]}}} & ~55M & 2.92 & 2.23 & 2.84 & 2.27 & - \\
\hline
\rowcolor{cvprblue!15} 
& {S-VFM (bottleneck sum)\textcolor{lightgray}{\scriptsize{[Ours]}}} & 60.6M & 2.94 & 2.28 & 2.09 & 2.06 & 2.01 \\
\rowcolor{cvprblue!15} 
\multirow{-2}{*}{\rotatebox{90}{\tiny Ours}}
& {S-VFM (adaptive norm)\textcolor{lightgray}{\scriptsize{[Ours]}}} & 60.6M & 2.81 & 2.16 & 2.02 & 1.97 & 1.95 \\
\Xhline{4\arrayrulewidth}
\end{tabular}
\end{adjustbox} 

\vspace{-1em}
\caption{
\textbf{Quantitative Comparison with Different Generation Methods on CIFAR-10 Dataset.} 
Our method (\textit{adaptive normalization} variant) achieves the best performance in one-step generation (NFE $=1$). 
Moreover, the FID score consistently decreases as NFE increases.
}
\label{tab:cifar10_fid}
\end{table*}

\vspace{-1mm}
To quantitatively evaluate the generation performance, we compare our method with several state-of-the-art approaches by measuring the generation quality using the Fréchet Inception Distance (FID)~\cite{heusel2017gans}, computed under varying NFE: $[1, 2, 5, 10]$ and adaptive-step Dopri5 ODE solver~\cite{dormand1980family}, as presented in Table~\ref{tab:cifar10_fid}. 
Two model variants employing different latent code conditioning mechanisms---\textit{adaptive normalization} and \textit{bottleneck sum}---were tested, with the straightness loss weight fixed at $\alpha = 10$ and the KL loss weight at $\beta = 1\text{e}{-2}$. 
See the hyperparameter analysis in the Ablation Study section for further details.
The results in Table~\ref{tab:cifar10_fid} show that our method (\textit{adaptive normalization} variant) achieves the best performance in one-step generation (NFE $=1$). 
Moreover, the FID score consistently decreases as NFE increases, while both Consistency Models and Mean Velocity Models tend to exhibit degraded performance with higher NFE values.

\subsection{ImageNet}
To evaluate robustness and efficiency on large-scale data, we evaluate methods on the ImageNet 256 × 256 dataset~\cite{krizhevsky2012imagenet}. For a fair comparison, we use the SiT-XL~\cite{ma2024sit} architecture, a transformer-based model that has shown strong results in image generation, and has been selected as backbone for many previous studies~\cite{zhang2025creatilayout, jin2025pyramidal, ma2025beyond}.
For a fair comparison, we strictly follow the original training recipe in the open-source SiT repository and replicate the training process from the SiT paper, while introducing our model, S-VFM-XL, by substituting the classic flow matching loss with the straight VFM loss in Equation~\eqref{eq:total_loss}. 
The posterior model $q_\phi$ also utilizes an SiT transformer architecture but with half the number of blocks. In the final layer, the features are average-pooled and passed through an MLP layer to predict $\mu_\phi$ and $\sigma_\phi$. 
We sample the latent code $z$ from the posterior $q_\phi$ during training and from the prior distribution $p(z)$ during inference. This latent code $z$ is then processed by three MLP layers and fused with the velocity network $v_\theta$ via adaptive normalization. 
By default, we use the Euler-Maruyama sampler with the SDE solver in the inference stage, as described by SiT~\cite{ma2024sit}.
\begin{table}[t]
\centering
\scriptsize
\setlength{\tabcolsep}{5pt}
\renewcommand{\arraystretch}{1}

\begin{adjustbox}{width=0.9\linewidth,center}
\begin{tabular}{lcccc}
\Xhline{4\arrayrulewidth} 
\textbf{Method} & \textbf{\# Params.} & \textbf{NFE} & \textbf{FID} \\
\hline\hline 
iCT-XL/2~\cite{song2024improved} \textcolor{lightgray}{\scriptsize{[ICLR'24]}} & 675M & 1 & 34.24 \\
Shortcut-XL/2~\cite{frans2025one} \textcolor{lightgray}{\scriptsize{[ICLR'25]}} & 675M & 1 & 10.60 \\
MeanFlow-XL/2~\cite{geng2025mean} \textcolor{lightgray}{\scriptsize{[NIPS'25]}} & 676M & 1 & 3.43 \\
\rowcolor{cvprblue!15}
S-VFM-XL/2 \textcolor{lightgray}{\scriptsize{[Ours]}} & 677M & 1 & 3.31 \\
\hline 
iCT-XL/2~\cite{song2024improved} \textcolor{lightgray}{\scriptsize{[ICLR'24]}} & 675M & 2 & 20.30 \\
iMM-XL/2~\cite{zhou2025inductive} \textcolor{lightgray}{\scriptsize{[ICML'25]}} & 675M & $1\times2$ & 7.77 \\
MeanFlow-XL/2~\cite{geng2025mean} \textcolor{lightgray}{\scriptsize{[NIPS'25]}} & 676M & 2 & 2.93 \\
\rowcolor{cvprblue!15}
S-VFM-XL/2 \textcolor{lightgray}{\scriptsize{[Ours]}} & 677M & 2 & 2.86 \\
\Xhline{4\arrayrulewidth} 
\end{tabular}
\end{adjustbox} 

\vspace{-1em}
\caption{\textbf{Quantitative Comparison with Different Generation Methods on ImageNet $256 \times 256$ Dataset.} 
Our method achieves the best performance in few-step generation.}
\label{tab:image256}
\end{table}

Following the evaluation protocol of previous studies~\cite{geng2025mean, song2024improved}, we randomly generate 50K images from each model and report the corresponding FID scores in Table~\ref{tab:image256}. 
S-VFM-XL consistently outperforms both Consistency Models and Mean Velocity Models, achieving notable gains under identical training conditions. 
These results highlight the importance of modeling a global \textit{generation overview} within the velocity vector field, which helps resolve ambiguities caused by intersecting interpolants and complements the straightness objective, resulting in straighter generation trajectories that enable few-step generation. 
\begin{figure}[htbp]
  \centering
  \includegraphics[width=0.8\linewidth]{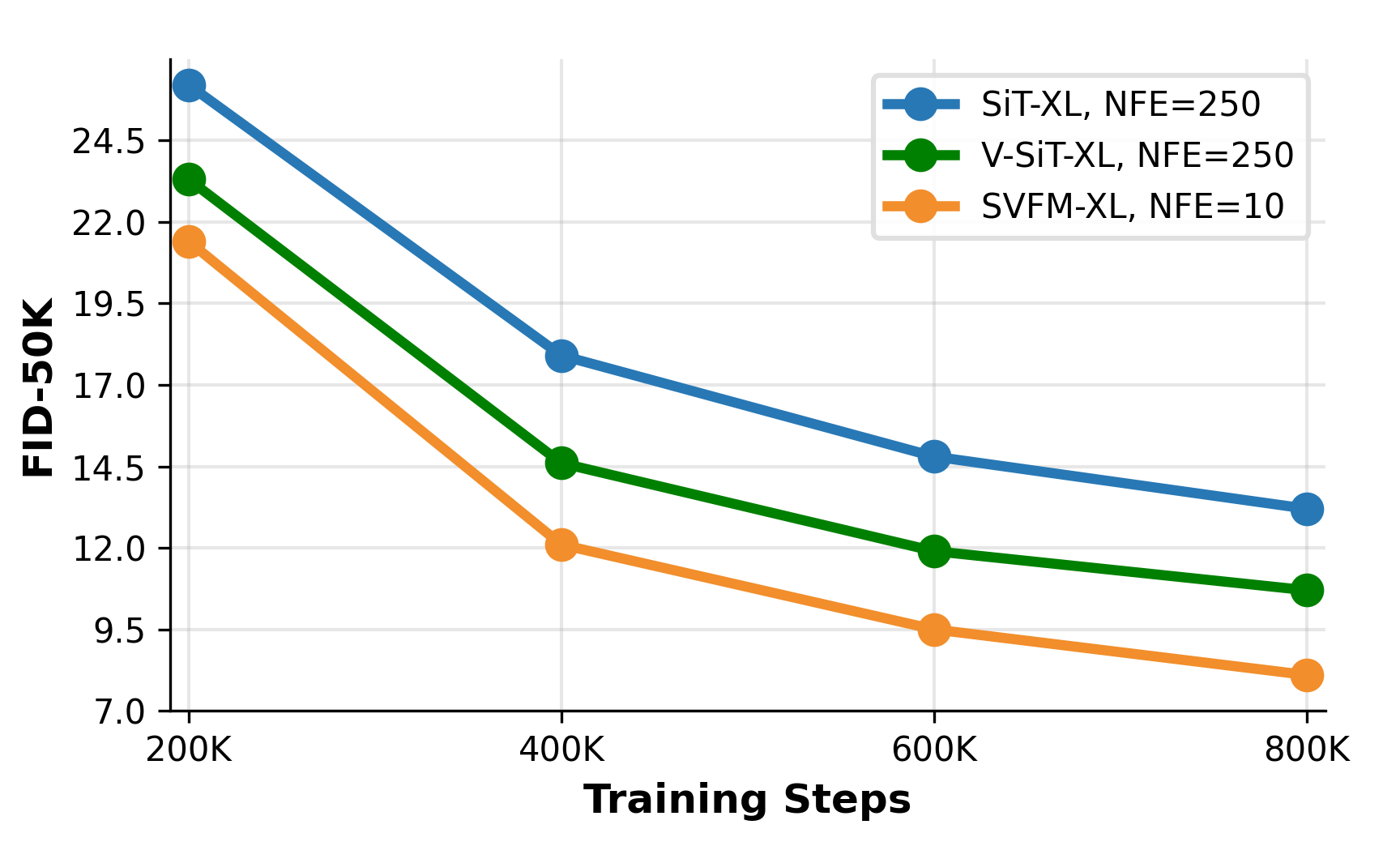}
    
  \vspace{-1em}
\caption{
\textbf{Comparison of FID-50K Score over Training Iterations on ImageNet $256 \times 256$ Dataset.} 
}

  \label{fig:imagenet_train}
\end{figure}

In addition, we analyze model performance across different training iterations, comparing our method with SiT and VFM (S-VFM without the straightness objective), as shown in Figure~\ref{fig:imagenet_train}. 
For SiT and VFM, we follow the default inference setting~\cite{ma2024sit, guo2025variational} with $\text{NFE}=250$, while for the proposed S-VFM, we use $\text{NFE}=10$ to reflect its few-step generation property. 
The results demonstrate a consistent performance improvement, further confirming the efficiency and effectiveness of S-VFM in both training and inference.

\subsection{Ablation Study}
\begin{figure}[htbp]
  \centering
  \includegraphics[width=0.95\linewidth]{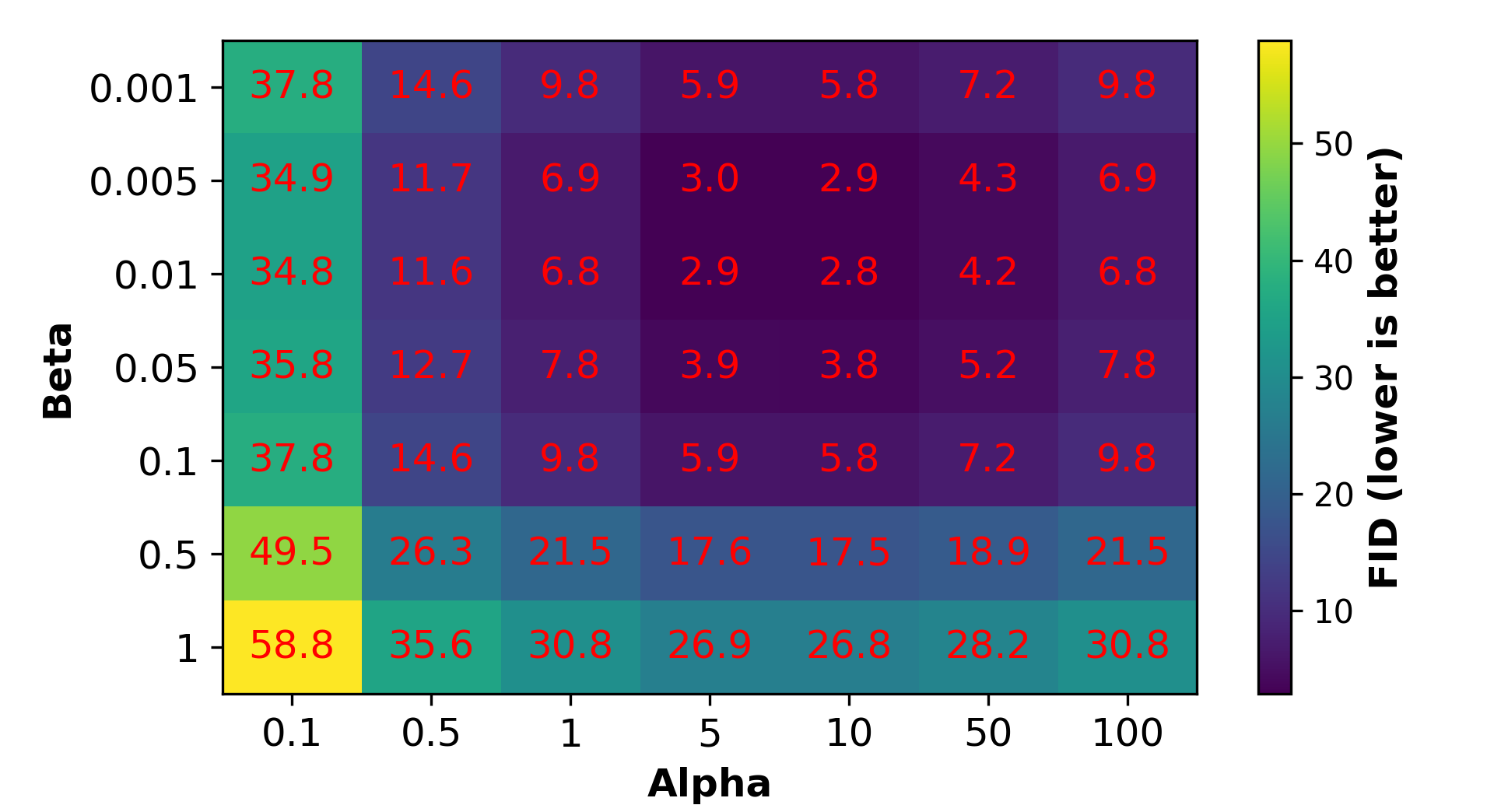}
    
  \vspace{-1em}
\caption{
\textbf{Ablation Study Results by Adjusting $\alpha$ and $\beta$ on CIFAR-10 Dataset.} The value in the grid represents the FID score of one-step generation (NFE = $1$), lower is better.
}

  \label{fig:ablation}
\end{figure}


To evaluate the contribution of each hyperparameter to S-VFM, we conduct an ablation study by evaluating the generation performance while varying the hyperparameters $\alpha$ and $\beta$. 
Figure~\ref{fig:ablation} presents the one-step generation FID scores for different combinations of $\alpha$ and $\beta$ on the CIFAR-10 dataset in a heatmap format, where lower values indicate better performance. 
The results show that S-VFM exhibits a stable performance plateau around the optimal region, suggesting the method is relatively insensitive to hyperparameter variations and demonstrating its adaptive robustness.

\section{Conclusion}
\label{sec:conclusion}

In this work, we address the key contradiction between straight trajectory learning and the independent coupling implemented in Flow Matching by introducing the Straight Variational Flow Matching framework. By incorporating a variational latent code that offers a global ``generation overview,'' allowing the model to distinguish between intersecting interpolants. Additionally, by using a straightness objective that penalizes time variations in the velocity field, the model learns to generate naturally straight trajectories.

\section*{Acknowledgment}

This manuscript was co-authored by Oak Ridge National Laboratory (ORNL), operated by UT-Battelle, LLC under Contract No. DE-AC05-00OR22725 with the U.S. Department of Energy. Any subjective views or opinions expressed in this paper do not necessarily represent those of the U.S. Department of Energy or the United States Government.

{
    \small
    \bibliographystyle{ieeenat_fullname}
    \bibliography{main}
}


\clearpage
\setcounter{page}{1}
\maketitlesupplementary

\section{Proofs for Straight, Non-Intersecting Interpolations}
\label{app_sec:proof}

\noindent
This appendix develops, in a self-contained way, the analytical objects and proofs
underlying our method. We introduce a \emph{straight, non-intersecting interpolation} \(Z\)
that is \emph{compatible} with the linear interpolation induced by a given coupling \(X\),
and we establish: (i) preservation of time marginals, (ii) reduction of convex transport costs,
and (iii) equivalent characterizations of straight (non-intersecting) couplings.

\bigskip
\noindent\textbf{Notation.}
For a random process \(X=\{X_t\}_{t\in[0,1]}\), write \(\mathrm{Law}(X_t)\) for its marginal law at time \(t\),
and \(\mathbb{E}[\cdot]\) for expectation.
For a coupling \((X_0,X_1)\) we denote \(\Delta^X:=X_1-X_0\) and the linear interpolant
\(X_t=(1-t)X_0 + t X_1\).
Conditional expectations such as \(\mathbb{E}[\Delta^X\mid X_t=x]\) are understood whenever they exist.

\subsection{Preliminaries}

\begin{definition}[Coupling, linear interpolation, and conditional velocity]
Let \((X_0,X_1)\) be any coupling on \(\mathbb{R}^d\) with joint density
\(\rho(x_0,x_1)=\rho_0(x_0)\,\rho_1(x_1)\).
Define the linear interpolation
\begin{align*}
X_t=(1-t)X_0+tX_1, \qquad t\in[0,1]\\ \Delta^X=v^X(X_t,t\, \big| \,(X_0,X_1))=X_1-X_0.
\end{align*}
The \emph{marginal velocity} associated with \(X\) is
\begin{align*}
v^X(x,t) \;&=\; \mathbb{E}\big[\Delta^X \,\big|\, X_t=x\big],\qquad (x,t)\in \mathbb{R}^d\times[0,1] \\
& = \int \Delta^X \cdot p(X_0,X_1 \, \big| \, X_t=x) \,d (X_0,X_1).
\end{align*}
When \(x\notin \mathrm{supp}(X_t)\), we set \(v^X(x,t)=0\).
All statements below compare \(v^X\) only on sets where it is evaluated against a marginal law: \(\mathrm{Law}(X_t)\).
\end{definition}

\begin{definition}[Rectifiability of \(X\)] \label{app:def:rectifiability}
We say that \(X\) is \emph{rectifiable} if \(v^X(\cdot,t)\) is locally bounded for each \(t\) and the continuity equation
\[
\partial_t \pi_t + \nabla\!\cdot\!\big(v^X(\cdot,t)\,\pi_t\big) \;=\;0,\qquad \pi_{t=0}=\mathrm{Law}(X_0),
\]
admits a unique solution \(\{\pi_t\}_{t\in[0,1]}\).
Equivalently, the ordinary differential equation \(\dot X_t=v^X(X_t,t)\) admits a unique flow of characteristics.
\end{definition}

\begin{definition}[Non-intersection functional]
For any coupling \((X_0,X_1)\) with linear interpolant \(X_t\), define
\[
V\big((X_0,X_1)\big) \;:=\; \int_0^1 \mathbb{E}\!\left[\big\|\,\Delta^X - \mathbb{E}[\Delta^X \mid X_t]\,\big\|^2\right]\,dt.
\]
\end{definition}

\begin{lemma}[Non-intersection if and only if zero conditional variance]
\label{app:lem:non_intersection_equiv}
For a coupling \((X_0,X_1)\) with linear interpolation \(X_t\), the following are equivalent:
\begin{enumerate}
\item For two independent identically distributed couplings \((X_0,X_1)\) and \((X_0',X_1')\),
\begin{align*}
\exists t\in(0,1): &(1-t)X_0+tX_1=(1-t)X_0'+tX_1' \\
&\mathbb{P}\!\left[(X_0,X_1)\neq(X_0',X_1')\right]=0.
\end{align*}
\item \(V\big((X_0,X_1)\big)=0\); equivalently \(\Delta^X=\mathbb{E}[\Delta^X\mid X_t]\) for \(t\in[0,1]\).
\end{enumerate}
\end{lemma}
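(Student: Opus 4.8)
The plan is to establish a chain of equivalences connecting the probabilistic non-crossing statement in item~(1) to the vanishing of the conditional-variance functional in item~(2), passing through an intermediate \emph{measurability} characterization. The pivot of the whole argument is the elementary observation that, for each fixed \(t\in(0,1)\), the pair \((X_t,\Delta^X)\) determines \((X_0,X_1)\) through the affine bijection
\[
X_0 = X_t - t\,\Delta^X, \qquad X_1 = X_t + (1-t)\,\Delta^X,
\]
so that \(\sigma(X_0,X_1)=\sigma(X_t,\Delta^X)\) up to null sets. Consequently, \(\Delta^X\) is \(\sigma(X_t)\)-measurable if and only if \((X_0,X_1)\) is \(\sigma(X_t)\)-measurable, i.e. if and only if \(X_t\) alone determines the whole coupling.

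First I would treat item~(2). Since the integrand \(t\mapsto \mathbb{E}\big[\|\Delta^X-\mathbb{E}[\Delta^X\mid X_t]\|^2\big]\) is nonnegative, \(V\big((X_0,X_1)\big)=0\) holds iff this integrand vanishes for (almost) every \(t\). For a fixed \(t\), the quantity \(\mathbb{E}\big[\|\Delta^X-\mathbb{E}[\Delta^X\mid X_t]\|^2\big]\) is exactly the expected trace of the conditional covariance \(\mathbb{E}\big[\operatorname{tr}\operatorname{Cov}(\Delta^X\mid X_t)\big]\); it is zero iff \(\Delta^X=\mathbb{E}[\Delta^X\mid X_t]\) almost surely, that is iff \(\Delta^X\) is \(\sigma(X_t)\)-measurable. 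By the affine bijection above, this is equivalent to \((X_0,X_1)\) being a deterministic measurable function of \(X_t\).

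Next I would convert the measurability statement into the collision statement of item~(1) through the disintegration of the coupling along \(x\mapsto X_t=(1-t)X_0+tX_1\). Writing \(\mu^t_x\) for the conditional law of \((X_0,X_1)\) given \(X_t=x\), the map \(X_t\mapsto(X_0,X_1)\) is well defined almost surely iff \(\mu^t_x\) is a Dirac mass for \(\mathrm{Law}(X_t)\)-a.e.\ \(x\). Now take two i.i.d.\ couplings: conditionally on sharing the same interpolant value \(X_t=X_t'=x\) at time \(t\), they are two independent draws from \(\mu^t_x\), and these agree almost surely precisely when \(\mu^t_x\) is degenerate. Thus ``\((X_0,X_1)\) is a function of \(X_t\)'' coincides with ``whenever the two interpolants meet at time \(t\), the couplings agree,'' which is item~(1). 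Chaining the three equivalences closes the loop \(1\Leftrightarrow 2\).

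The step I expect to be the main obstacle is the careful reconciliation of the quantifiers and null sets in this last conversion. Item~(2) only gives vanishing of the integrand for \emph{almost every} \(t\), whereas item~(1) is phrased with an existential ``\(\exists t\in(0,1)\)''; bridging them rigorously requires either upgrading ``a.e.\ \(t\)'' to ``all \(t\)'' via continuity of \(t\mapsto\mathbb{E}\big[\|\mathbb{E}[\Delta^X\mid X_t]\|^2\big]\) under the standing regularity, or exploiting the geometric fact that the difference \(D_t=X_t-X_t'\) is affine in \(t\), so two distinct linear interpolants can coincide at a synchronized interior time for at most one value of \(t\); hence a genuine collision already pins conditional non-degeneracy to an honest time. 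I would also be careful that \(\{X_t=X_t'\}\) is \(\mathrm{Law}(X_t)\)-null in the continuous case, so item~(1) must be read at the level of the disintegration \(\mu^t_x\) (Dirac versus non-Dirac) rather than as a literal positive-probability event; making this identification precise is the delicate part of the argument.
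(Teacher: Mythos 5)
Your proof is correct and follows essentially the same route as the paper's: both arguments pivot on the observation that zero conditional variance means the slope \(\Delta^X\) (hence, via the affine bijection \(X_0=X_t-t\Delta^X\), \(X_1=X_t+(1-t)\Delta^X\), the endpoint pair) is a measurable function of \(X_t\), which is exactly the non-intersection property. Your version is in fact more careful than the paper's two-line proof --- in particular, the disintegration argument for the i.i.d.\ copies and your explicit handling of the null event \(\{X_t=X_t'\}\) and of the ``a.e.\ \(t\)'' versus ``\(\exists t\)'' quantifier gap make rigorous steps that the paper only asserts.
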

\begin{proof}
\(\,(1)\Rightarrow(2)\,\): Non-intersection implies that the slope \(\Delta^X\) is a measurable function of \((X_t,t)\), hence \(\mathrm{Var}(\Delta^X\mid X_t)=0\);
integrating over \(t\) gives \(V=0\).
\(\,(2)\Rightarrow(1)\,\): If \(\Delta^X=\mathbb{E}[\Delta^X\mid X_t]\), then the slope through any \(X_t\) is unique, two distinct lines cannot share a point at the same time unless they coincide.
\end{proof}

\subsection{The straight interpolation \texorpdfstring{$Z$}{Z} compatible with \texorpdfstring{$X$}{X}}

\begin{definition}[Straight interpolation compatible with \(X\)]
\label{app:def:Z}
A process \(Z=\{Z_t\}_{t\in[0,1]}\) on the same probability space as \(X\) is called a
\emph{straight interpolants} compatible with Flow Matching trajectories \(X\) if the following hold:
\begin{itemize} [leftmargin=2em]
\item[(Z1)] \textbf{Linear paths.} There exist random endpoints \((Z_0,Z_1)\) with \(Z_t=(1-t)Z_0+tZ_1\) and \(\Delta^Z:=Z_1-Z_0\).
\item[(Z2)] \textbf{Non-intersection.} \(V\big((Z_0,Z_1)\big)=0\). Equivalently, \(\Delta^Z=\mathbb{E}[\Delta^Z\mid Z_t]\) for \(t\in[0,1]\).
\item[(Z3)] \textbf{Velocity-field matching.} With \(\mu_t=\mathrm{Law}(Z_t)\), for \(x \sim \mu_t\) and \(t\in[0,1]\).
\[
v^Z(x,t):=\mathbb{E}[\Delta^Z\mid Z_t=x]=v^X(x,t).
\]
\item[(Z4)] \textbf{Initialization.} \(Z_0=X_0\).
\end{itemize}
\end{definition}

\begin{remark}[Immediate consequences]
\label{app:rem:conseq}
From (Z2), \(v^Z(Z_t,t)=\Delta^Z\), hence
\[
Z_1-Z_0=\int_0^1 v^Z(Z_t,t)\,dt.
\]
By (Z3), \(v^Z=v^X\) at \(x \sim \mu_t\), so we also have
\[
\Delta^Z = \int_0^1 v^X(Z_t,t)\,dt.
\]
\end{remark}

\subsection{Main results}

\begin{theorem}[Marginal preservation]
\label{app:thm:marginal_preservation}
Assume \(X\) is rectifiable and \(Z\) satisfies Definition~\ref{app:def:Z}.
Then \(\mathrm{Law}(Z_t)=\mathrm{Law}(X_t)\) for all \(t\in[0,1]\).
\end{theorem}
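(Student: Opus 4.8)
The plan is to exhibit both $\mu_t := \mathrm{Law}(Z_t)$ and $\pi_t := \mathrm{Law}(X_t)$ as solutions of one and the same continuity equation, with the same initial condition, and then invoke the uniqueness guaranteed by rectifiability (Definition~\ref{app:def:rectifiability}) to conclude they coincide. By construction of the marginal velocity $v^X(x,t)=\mathbb{E}[\Delta^X\mid X_t=x]$, the family $\pi_t$ already solves $\partial_t \pi_t + \nabla\!\cdot\!\big(v^X(\cdot,t)\,\pi_t\big)=0$ with $\pi_0=\mathrm{Law}(X_0)$; indeed, the same test-function computation performed below, applied to $X$ with $\dot X_t=\Delta^X$, verifies this directly. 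It therefore suffices to prove that $\mu_t$ solves the \emph{same} PDE with the \emph{same} initial datum.

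For the velocity-field step, I would test against an arbitrary $\varphi\in C_c^\infty(\mathbb{R}^d)$ and differentiate in time. By (Z1) the paths are linear, so $\dot Z_t=\Delta^Z$ pathwise and $\frac{d}{dt}\mathbb{E}[\varphi(Z_t)]=\mathbb{E}[\nabla\varphi(Z_t)\cdot\Delta^Z]$. Property (Z2) says $\Delta^Z=\mathbb{E}[\Delta^Z\mid Z_t]=v^Z(Z_t,t)$, i.e. the slope through $Z_t$ is a deterministic function of the current point, and (Z3) upgrades this to $v^Z=v^X$ on $\mathrm{supp}(\mu_t)$. Substituting yields $\frac{d}{dt}\mathbb{E}[\varphi(Z_t)]=\int \nabla\varphi(x)\cdot v^X(x,t)\,\mu_t(dx)$, which is precisely the weak formulation of $\partial_t \mu_t + \nabla\!\cdot\!\big(v^X(\cdot,t)\,\mu_t\big)=0$. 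Note that (Z3) is needed only on $\mathrm{supp}(\mu_t)$, which is exactly the set this integral sees, so the matching hypothesis is used at full strength and nothing is required off-support.

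Finally, (Z4) gives $\mu_0=\mathrm{Law}(Z_0)=\mathrm{Law}(X_0)=\pi_0$, so $\mu$ and $\pi$ solve the continuity equation with identical velocity field and identical initial condition; rectifiability asserts this solution is unique, whence $\mu_t=\pi_t$ for all $t\in[0,1]$, which is the claim. The main obstacle I anticipate is analytic rather than structural: justifying the interchange of $\frac{d}{dt}$ with $\mathbb{E}[\,\cdot\,]$ and the passage to the weak PDE demands enough integrability and regularity. Here the local boundedness of $v^X$ built into rectifiability, together with $\varphi\in C_c^\infty$ and the linearity $\dot Z_t=\Delta^Z$ with $\Delta^Z$ integrable, makes dominated convergence applicable and controls every term. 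A secondary point to handle carefully is that $v^Z=v^X$ holds only $\mu_t$-almost everywhere; since the continuity equation for $\mu_t$ depends on $v^X$ solely through its $\mu_t$-integral, this almost-everywhere identification is exactly what the argument consumes.
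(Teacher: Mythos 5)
Your proposal is correct and follows essentially the same route as the paper's proof: test against a compactly supported smooth function, use (Z1)--(Z3) to show $\mathrm{Law}(Z_t)$ weakly solves the continuity equation driven by $v^X$, match initial conditions via (Z4), and conclude by the uniqueness built into rectifiability. Your added remarks --- that $\pi_t$ itself solves the same PDE, that (Z3) is only needed $\mu_t$-almost everywhere, and that the derivative/expectation interchange needs the local boundedness from rectifiability --- are sound clarifications of steps the paper leaves implicit.
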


\begin{proof}
Let \(h: \mathbb{R}^d \rightarrow \mathbb{R}\) be any compactly supported continuously differentiable test function. Using (Z1) and (Z2),
\begin{align*}
\frac{d}{dt}\mathbb{E}[h(Z_t)]
=\mathbb{E}[\nabla h(Z_t)^\top \dot Z_t]
&=\mathbb{E}[\nabla h(Z_t)^\top \Delta^Z] \\
&=\mathbb{E}[\nabla h(Z_t)^\top v^Z(Z_t,t)].
\end{align*}
By (Z3), \(v^Z(\cdot,t)=v^X(\cdot,t)\) for \(\mu_t\), therefore
\[
\frac{d}{dt}\mathbb{E}[h(Z_t)] = \mathbb{E}[\nabla h(Z_t)^\top v^X(Z_t,t)].
\]
This shows that \(\mu_t=\mathrm{Law}(Z_t)\) solves the continuity equation
\(\partial_t \mu_t+\nabla\!\cdot\!\big(v^X(\cdot,t)\mu_t\big)=0\) with
\(\mu_0=\mathrm{Law}(Z_0)=\mathrm{Law}(X_0)\) by (Z4).
By rectifiability of \(X\), the solution is unique and equals \(\pi_t=\mathrm{Law}(X_t)\).
Hence \(\mu_t=\pi_t\) for all \(t\in[0,1]\).
\end{proof}

\begin{theorem}[Convex transport-cost reduction]
\label{app:thm:convex_cost}
Under the assumptions of Theorem~\ref{app:thm:marginal_preservation}, for any convex \(c:\mathbb{R}^d\to\mathbb{R}\),
\[
\mathbb{E}\!\left[c(Z_1-Z_0)\right] \;\le\; \mathbb{E}\!\left[c(X_1-X_0)\right].
\]
If \(c\) is strictly convex, equality holds if and only if \(V\big((X_0,X_1)\big)=0\).
\end{theorem}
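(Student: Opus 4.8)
The plan is to reduce the whole statement to a single application of the conditional Jensen inequality at each fixed time, and then to transfer the resulting bound from $X$ to $Z$ via marginal preservation. First I would record the pointwise identity that makes $Z$ special. Combining the non-intersection property (Z2), which gives $\Delta^Z=\mathbb{E}[\Delta^Z\mid Z_t]$, with velocity-field matching (Z3), which gives $\mathbb{E}[\Delta^Z\mid Z_t]=v^X(Z_t,t)$ on $\mathrm{supp}(\mu_t)$, I obtain
\[
\Delta^Z = v^X(Z_t,t)\quad\text{almost surely, for every } t\in[0,1].
\]
Thus the random slope of the straight interpolant equals the marginal velocity of $X$ evaluated along the $Z$-path, simultaneously at all times; this is exactly the object I will compare against.

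Next, for a fixed $t$ I would apply the conditional form of Jensen's inequality to the convex function $c$. Since $v^X(X_t,t)=\mathbb{E}[\Delta^X\mid X_t]$ by the definition of the marginal velocity,
\[
\mathbb{E}\big[c(\Delta^X)\mid X_t\big]\;\ge\;c\big(\mathbb{E}[\Delta^X\mid X_t]\big)=c\big(v^X(X_t,t)\big).
\]
Taking expectations yields $\mathbb{E}[c(\Delta^X)]\ge\mathbb{E}[c(v^X(X_t,t))]$ for each $t$. I then invoke Theorem~\ref{app:thm:marginal_preservation}, which gives $\mathrm{Law}(X_t)=\mathrm{Law}(Z_t)$: because $v^X(\cdot,t)$ is a fixed measurable map and $c$ is continuous, $\mathbb{E}[c(v^X(X_t,t))]=\mathbb{E}[c(v^X(Z_t,t))]$, and by the pointwise identity of the first step this equals $\mathbb{E}[c(\Delta^Z)]$, a quantity independent of $t$. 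Chaining the three relations gives $\mathbb{E}[c(\Delta^X)]\ge\mathbb{E}[c(\Delta^Z)]$, the claimed inequality.

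For the strict-convexity equality case I would analyze the single Jensen step. When $c$ is strictly convex, equality in conditional Jensen at a given $t$ holds if and only if $\Delta^X$ is $\sigma(X_t)$-measurable, i.e.\ $\Delta^X=\mathbb{E}[\Delta^X\mid X_t]$ almost surely. Since the chained bound holds for every $t$ with the same $t$-independent right-hand side $\mathbb{E}[c(\Delta^Z)]$, an overall equality $\mathbb{E}[c(\Delta^X)]=\mathbb{E}[c(\Delta^Z)]$ forces equality in Jensen for almost every $t$, hence vanishing conditional variance of $\Delta^X$ at each such $t$; integrating over $t$ gives $V\big((X_0,X_1)\big)=0$. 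Conversely, if $V\big((X_0,X_1)\big)=0$, then Lemma~\ref{app:lem:non_intersection_equiv} yields $\Delta^X=\mathbb{E}[\Delta^X\mid X_t]$, Jensen becomes an equality, and the chain collapses to $\mathbb{E}[c(\Delta^X)]=\mathbb{E}[c(\Delta^Z)]$.

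I expect the main obstacle to be technical rather than conceptual: ensuring all expectations are finite so that conditional Jensen applies (using local boundedness of $v^X$ from rectifiability together with a mild integrability assumption on $c(\Delta^X)$), and carefully justifying that aggregate equality propagates to \emph{every} time slice in the strictly convex case. The measurability and continuity needed to carry the expectation across $\mathrm{Law}(X_t)=\mathrm{Law}(Z_t)$ is the point I would treat most carefully, although it is exactly what (Z3) supplies on the relevant supports.
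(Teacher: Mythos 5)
Your proof is correct and rests on the same two pillars as the paper's: the conditional Jensen inequality applied to $c(\Delta^X)$ given $X_t$, and the transfer of expectations from $X_t$ to $Z_t$ via Theorem~\ref{app:thm:marginal_preservation}. The one genuine difference is that you dispense with the paper's first step, which writes $Z_1-Z_0=\int_0^1 v^Z(Z_t,t)\,dt$ and applies Jensen's inequality in the time variable. You instead observe that (Z2) and (Z3) already give the pointwise identity $\Delta^Z=v^Z(Z_t,t)=v^X(Z_t,t)$ for every $t$, so $\mathbb{E}[c(v^X(Z_t,t))]=\mathbb{E}[c(\Delta^Z)]$ holds slice by slice and no time-averaging is needed. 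This is a mild but real improvement: the paper's time-Jensen step is in fact always an equality (the integrand is constant in $t$ along each path by (Z2)), and the paper's equality analysis silently ignores it; your version has only one inequality in the chain, so the strict-convexity equality case reduces cleanly to saturating conditional Jensen at each $t$, exactly as you argue. Your closing remarks about integrability of $c(\Delta^X)$ are appropriate caveats that the paper also leaves implicit.
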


\begin{proof}
From (Z2),
\[
Z_1-Z_0=\int_0^1 v^Z(Z_t,t)\,dt.
\]
By Jensen’s inequality in time,
\[
\mathbb{E}\!\left[c(Z_1-Z_0)\right]
\le \int_0^1 \mathbb{E}\!\left[c\big(v^Z(Z_t,t)\big)\right]\,dt.
\]
Using (Z3) and Theorem~\ref{app:thm:marginal_preservation},
\begin{align*}
\int_0^1 \mathbb{E}\!\left[c\big(v^Z(Z_t,t)\big)\right]dt
&= \int_0^1 \mathbb{E}\!\left[c\big(v^X(X_t,t)\big)\right]dt \\
&= \int_0^1 \mathbb{E}\!\left[c\big(\mathbb{E}[\Delta^X\mid X_t]\big)\right]dt.
\end{align*}
A conditional Jensen inequality (conditioning on \(X_t\)) yields
\[
\mathbb{E}\!\left[c\big(\mathbb{E}[\Delta^X\mid X_t]\big)\right]
\le \mathbb{E}\!\left[\mathbb{E}\!\left[c(\Delta^X)\mid X_t\right]\right]
= \mathbb{E}\!\left[c(\Delta^X)\right].
\]
Combining gives the inequality. If \(c\) is strictly convex, equality forces tightness of the conditional Jensen step, hence \(\Delta^X=\mathbb{E}[\Delta^X\mid X_t]\), which by Lemma~\ref{app:lem:non_intersection_equiv} is \(V((X_0,X_1))=0\).
\end{proof}

\begin{theorem}[Equivalent characterizations of straight interpolants]
\label{app:thm:equivalences}
Assume \(X\) is rectifiable and \(Z\) satisfies Definition~\ref{app:def:Z}.
The following are equivalent: when they hold, we say $X$ yielded from couplings \((X_0,X_1)\) are \emph{straight interpolants}, which coincide with $Z$.
\begin{enumerate} [leftmargin=2em]
\item[(i)] \textbf{Tight convex cost.} There exists a strictly convex \(c\) with
\(\mathbb{E}[c(Z_1-Z_0)]=\mathbb{E}[c(X_1-X_0)]\).
\item[(ii)] \textbf{Endpoint coincidence.} \((Z_0,Z_1)=(X_0,X_1)\).
\item[(iii)] \textbf{Pathwise equality.} \(Z_t=X_t\) for all \(t\in[0,1]\).
\item[(iv)] \textbf{Non-intersection for \(X\).} \(V\big((X_0,X_1)\big)=0\).
\end{enumerate}
\end{theorem}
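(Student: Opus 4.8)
The plan is to establish the four-way equivalence by proving a single closed cycle of implications, namely $(i)\Rightarrow(iv)\Rightarrow(iii)\Rightarrow(ii)\Rightarrow(i)$, so that each statement implies the next and the loop closes. Two of these arrows reduce to bookkeeping once the linear-path structure (Z1) is invoked, while essentially all of the content sits in the single non-trivial implication $(iv)\Rightarrow(iii)$, which is where I would concentrate the effort.

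For $(i)\Rightarrow(iv)$, I would appeal directly to the equality case already recorded in Theorem~\ref{app:thm:convex_cost}. That result gives $\mathbb{E}[c(Z_1-Z_0)]\le\mathbb{E}[c(X_1-X_0)]$ for every convex $c$, with equality for strictly convex $c$ holding precisely when $V\big((X_0,X_1)\big)=0$. Hence the mere existence of one strictly convex $c$ attaining equality, which is exactly hypothesis $(i)$, forces $V\big((X_0,X_1)\big)=0$, i.e.\ $(iv)$. No new computation is needed beyond citing the tightness condition.

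The main step $(iv)\Rightarrow(iii)$ I would prove by showing that both $X$ and $Z$ are characteristic curves of the \emph{same} velocity field started at the \emph{same} point, and then invoking uniqueness from rectifiability. First, Lemma~\ref{app:lem:non_intersection_equiv} converts $(iv)$ into the pointwise identity $\Delta^X=\mathbb{E}[\Delta^X\mid X_t]=v^X(X_t,t)$, so that $\dot X_t=\Delta^X=v^X(X_t,t)$ and $X$ solves $\dot x=v^X(x,t)$ with initial datum $X_0$. On the other side, (Z1) gives $\dot Z_t=\Delta^Z$, (Z2) gives $\Delta^Z=v^Z(Z_t,t)$, and (Z3) gives $v^Z=v^X$ along $Z_t$, so that $\dot Z_t=v^X(Z_t,t)$; together with (Z4) this makes $Z$ a characteristic of the same ODE started at $Z_0=X_0$. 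Rectifiability (Definition~\ref{app:def:rectifiability}) guarantees a unique flow of characteristics, forcing $Z_t=X_t$ for all $t\in[0,1]$, which is $(iii)$.

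The closing two implications are immediate. For $(iii)\Rightarrow(ii)$, evaluating $Z_t=X_t$ at $t=0$ and $t=1$ yields $(Z_0,Z_1)=(X_0,X_1)$. For $(ii)\Rightarrow(i)$, equal endpoints give $Z_1-Z_0=X_1-X_0$ almost surely, so $\mathbb{E}[c(Z_1-Z_0)]=\mathbb{E}[c(X_1-X_0)]$ for every $c$, in particular for the strictly convex choice $c(y)=\|y\|^2$, which is $(i)$. I expect the only delicate point to be the $(iv)\Rightarrow(iii)$ arrow: one must ensure the identity $\Delta^X=v^X(X_t,t)$ is read along the actual trajectory rather than merely in an $L^2$ sense, so that $X$ genuinely satisfies the ODE and the rectifiability uniqueness is applicable; since $\Delta^X$ is constant in $t$ and $v^X$ is assumed regular, this promotion from almost-everywhere to pathwise equality is routine, and all remaining arrows are formal consequences of (Z1).
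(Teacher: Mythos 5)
Your proposal is correct and follows essentially the same route as the paper: the identical cycle of four implications, with $(i)\Rightarrow(iv)$ via the equality case of the convex transport-cost theorem, $(iv)\Rightarrow(iii)$ by showing $X$ and $Z$ are characteristics of the same ODE from the same initial point and invoking rectifiability, and the remaining two arrows as formal consequences. Your closing remark about promoting the identity $\Delta^X=v^X(X_t,t)$ from an $L^2$ to a pathwise statement is a reasonable extra caution that the paper glosses over, but it does not change the argument.
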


\begin{proof}
\begin{enumerate}
\item (iii) $\implies$ (ii) is immediate.
\item (ii) $\implies$ (i) turns the inequality of Theorem~\ref{app:thm:convex_cost} into equality.
\item (i) $\implies$ (iv): In Theorem~\ref{app:thm:convex_cost} with strictly convex \(c\), equality forces tightness of the conditional Jensen step; hence \(\Delta^X=\mathbb{E}[\Delta^X\mid X_t]\) for \(t\in[0,1]\), which is \(V((X_0,X_1))=0\) by Lemma~\ref{app:lem:non_intersection_equiv}.
\item(iv) $\implies$ (iii): If \(V((X_0,X_1))=0\), then for \(t\in[0,1]\), \(\dot X_t=\Delta^X=v^X(X_t,t)\).
For \(Z\), (Z2)–(Z3) give \(\dot Z_t=\Delta^Z=v^Z(Z_t,t)=v^X(Z_t,t)\).
Hence both \(X\) and \(Z\) solve \(\dot Y_t=v^X(Y_t,t)\) with the same initial value \(Y_0=X_0=Z_0\) by (Z4).
Uniqueness of rectifiability in Definition~\ref{app:def:rectifiability} yields \(Z\equiv X\).
\end{enumerate}
\end{proof}

\begin{corollary}[One-step generation]
\label{app:cor:one_step}
If any, and therefore all, of the items in Theorem~\ref{app:thm:equivalences} hold, then along each path
\(v^X(X_t,t)\equiv \Delta^X\) is constant in \(t\) and the ordinary differential equation \(\dot y_t=v^X(y_t,t)\) integrates in one step:
\[
X_1 \;=\; X_0 + \int_0^1 v^X(X_t,t)\,dt \;=\; X_0 + \Delta^X.
\]
\end{corollary}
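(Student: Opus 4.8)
The plan is to derive this corollary directly from the equivalent characterizations already established, since once item (iv) of Theorem~\ref{app:thm:equivalences} is in force, the velocity along each realized path collapses to its constant conditional-expectation value. Concretely I would proceed in two movements: first establish the pointwise constancy $v^X(X_t,t)\equiv\Delta^X$, and then integrate the ordinary differential equation over $[0,1]$ using this constancy. Because the statement is phrased as a consequence of the already-proven equivalences, essentially no new machinery is required; the work is in cleanly chaining the lemma, the definition of the marginal velocity, and the linear-path structure.

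First I would invoke item (iv), $V\big((X_0,X_1)\big)=0$, and apply Lemma~\ref{app:lem:non_intersection_equiv} to obtain $\Delta^X=\mathbb{E}[\Delta^X\mid X_t]$ almost surely for each $t\in[0,1]$. By the definition of the marginal velocity $v^X(x,t)=\mathbb{E}[\Delta^X\mid X_t=x]$, substituting the realized point $x=X_t$ yields $v^X(X_t,t)=\mathbb{E}[\Delta^X\mid X_t]=\Delta^X$ for $\mathrm{Law}(X_t)$-almost every path and a.e.\ $t$. Since the right-hand side $\Delta^X=X_1-X_0$ carries no $t$-dependence, this simultaneously proves that $v^X(X_t,t)$ is constant in $t$ along each trajectory, which is the first assertion of the corollary.

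Second I would integrate. From the linear-path structure (item (iii), equivalently the property in Definition~\ref{app:def:Z}), $X_t=(1-t)X_0+tX_1$ gives $\dot X_t=X_1-X_0=\Delta^X$, which by the previous step equals $v^X(X_t,t)$; hence $X$ solves $\dot y_t=v^X(y_t,t)$ with $y_0=X_0$. Applying the fundamental theorem of calculus, $X_1=X_0+\int_0^1\dot X_t\,dt=X_0+\int_0^1 v^X(X_t,t)\,dt$, and because the integrand is the constant $\Delta^X$ the integral evaluates to $\Delta^X$, giving $X_1=X_0+\Delta^X$. Constancy further means a single Euler increment $X_0+(1-0)\,v^X(X_0,0)$ reproduces $X_1$ exactly, which is the asserted one-step generation.

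The only real subtlety—and the step I would be most careful about—is the substitution of the random path point into the conditional-expectation form of $v^X$. The definition only compares $v^X(\cdot,t)$ against the marginal law $\mathrm{Law}(X_t)$, so I would phrase the identity $v^X(X_t,t)=\Delta^X$ as an almost-sure statement, using a regular version of the conditional expectation so that evaluating the deterministic function $v^X(\cdot,t)$ at the random argument $X_t$ agrees $\mathrm{Law}(X_t)$-a.s.\ with $\mathbb{E}[\Delta^X\mid X_t]$. The attendant null-set exclusions are harmless under rectifiability (Definition~\ref{app:def:rectifiability}), which guarantees the characteristic flow is well defined; everything else is routine.
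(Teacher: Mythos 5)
Your proof is correct and follows exactly the route the paper intends: the paper gives no explicit proof of this corollary, treating it as immediate from item (iv) of Theorem~\ref{app:thm:equivalences} together with Lemma~\ref{app:lem:non_intersection_equiv}, which is precisely your chain $\Delta^X=\mathbb{E}[\Delta^X\mid X_t]=v^X(X_t,t)$ followed by integration of the constant velocity over $[0,1]$. Your added care about regular conditional expectations and null sets is a harmless refinement of the same argument.
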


\subsection{Equivalence with vanishing time derivative}

\begin{definition}[Time derivative]
For a differentiable vector field \(v:\mathbb{R}^d\times[0,1]\to\mathbb{R}^d\), the time derivative along its characteristics is
\begin{align*}
\frac{d}{dt} v(x,t) &= D_t v(x,t)\ := \frac{\partial v}{\partial t} \frac{dt}{dt} \ +\ \frac{\partial v}{\partial x} \frac{dx}{dt}   \\
&= \frac{\partial}{\partial t}v(x,t)\ +\ \big(\nabla_x v(x,t)\big)\,v(x,t)
\end{align*}
\end{definition}

\begin{theorem}[Straightness if and only if vanishing time derivative along \(X\)]
\label{app:thm:material_derivative_equivalence}
Assume \(X\) is rectifiable and \(Z\) satisfies Definition~\ref{app:def:Z}.
Assume moreover that \(v^X\) is continuously differentiable in \((x,t)\). Then $V\big((X_0,X_1)\big)=0$, if and only if 
$D_t v^X(X_t,t)=0$, for \(t\in[0,1]\).
\end{theorem}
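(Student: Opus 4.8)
The plan is to route both directions through Lemma~\ref{app:lem:non_intersection_equiv}, which identifies $V((X_0,X_1))=0$ with the pointwise identity $\Delta^X = v^X(X_t,t)$ holding for every $t\in[0,1]$. Because the linear interpolant \emph{always} satisfies $\dot X_t=\Delta^X$, this identity is exactly the statement that the interpolant $X$ is an integral curve of the field $v^X$ along which the velocity is constant in $t$. Thus it suffices to show that ``$X$ is such a constant-velocity characteristic'' is equivalent to ``$D_t v^X(X_t,t)=0$ for all $t$,'' after which Lemma~\ref{app:lem:non_intersection_equiv} closes both ends.

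For the direction $V((X_0,X_1))=0\Rightarrow D_t v^X(X_t,t)=0$ I would first invoke Lemma~\ref{app:lem:non_intersection_equiv} to get $v^X(X_t,t)=\Delta^X$, so that $\dot X_t=\Delta^X=v^X(X_t,t)$ and $X$ is a genuine characteristic of $v^X$. Along a characteristic the total time derivative of $v^X$ coincides with the material derivative of the time-derivative Definition, and by Corollary~\ref{app:cor:one_step} the velocity $v^X(X_t,t)\equiv\Delta^X$ is constant in $t$; hence $D_t v^X(X_t,t)=\frac{d}{dt}v^X(X_t,t)=\frac{d}{dt}\Delta^X=0$. This is a one-line chain-rule computation once the interpolant is known to be a characteristic.

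For the converse $D_t v^X(X_t,t)=0\Rightarrow V((X_0,X_1))=0$ I would work with the residual $r_t:=\Delta^X-v^X(X_t,t)$, which satisfies $\mathbb{E}[r_t\mid X_t]=0$ by the definition of $v^X$ as a conditional expectation, so that $g(t):=\mathbb{E}\,\|r_t\|^2=\mathbb{E}[\Var(\Delta^X\mid X_t)]$ and $V((X_0,X_1))=\int_0^1 g(t)\,dt$. Differentiating $v^X(X_t,t)$ along the interpolant (where $\dot X_t=\Delta^X$) and subtracting the material derivative yields $\frac{d}{dt}v^X(X_t,t)-D_t v^X(X_t,t)=\nabla_x v^X(X_t,t)\,r_t$; the hypothesis $D_t v^X(X_t,t)=0$ then gives the pathwise linear ODE $\dot r_t=-\nabla_x v^X(X_t,t)\,r_t$. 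The target is to show that this ODE, together with the constraint $\mathbb{E}[r_t\mid X_t]=0$, forces $r_t\equiv 0$ and hence $g\equiv0$ and $V=0$. Equivalently, one can argue that $D_t v^X=0$ on the supports makes the characteristics of $v^X$ straight lines which, by rectifiability (Definition~\ref{app:def:rectifiability}) and marginal preservation (Theorem~\ref{app:thm:marginal_preservation}), foliate the laws $\mathrm{Law}(X_t)$ without intersection; this straightened flow is the compatible straight interpolant $Z$ of Definition~\ref{app:def:Z}, so that $Z_t=X_t$ and $V((X_0,X_1))=0$ follow from Theorem~\ref{app:thm:equivalences}.

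The hard part will be precisely this backward implication, namely upgrading ``the field $v^X$ has vanishing material derivative on the supports'' (a statement about the straightness of the characteristic flow, i.e.\ about $Z$) to ``the coupling $(X_0,X_1)$ itself has non-intersecting interpolants'' (the statement $r_t\equiv0$). The subtlety is that the material derivative only probes $v^X$ along its own characteristics, whereas the interpolant moves with $\dot X_t=\Delta^X\neq v^X(X_t,t)$ whenever $r_t\neq0$; reconciling the two requires combining the linear ODE for $r_t$ with the definitional constraint $\mathbb{E}[r_t\mid X_t]=0$---e.g.\ via a Gr\"onwall/energy estimate on $g(t)$ or an endpoint evaluation at $t=0,1$---or else invoking the uniqueness in Definition~\ref{app:def:rectifiability} to identify the straightened flow with $Z$ and then appealing to Theorem~\ref{app:thm:equivalences}. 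I expect this reconciliation, rather than either chain-rule step, to be the crux.
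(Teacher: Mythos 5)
Your forward direction is correct and is essentially the paper's: from $V((X_0,X_1))=0$ and Lemma~\ref{app:lem:non_intersection_equiv} you get $\dot X_t=\Delta^X=v^X(X_t,t)$, so the interpolant is a genuine characteristic of $v^X$ along which the velocity is constant in $t$, and the chain rule gives $D_t v^X(X_t,t)=0$. You have also correctly diagnosed where the real difficulty sits in the converse: $D_t$ differentiates $v^X$ along its own characteristics, whereas the interpolant moves with $\dot X_t=\Delta^X$, and the two only coincide when the residual $r_t:=\Delta^X-v^X(X_t,t)$ already vanishes.

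The gap is that you never close that converse, and the closing move you lean on most---a Gr\"onwall/energy estimate on the linear ODE $\dot r_t=-\nabla_x v^X(X_t,t)\,r_t$---cannot work as stated. Gr\"onwall propagates $r_{t_0}=0$ from some time $t_0$, but the endpoint values are pinned by the coupling itself: $r_0=X_1-\mathbb{E}[X_1\mid X_0]$ and $r_1=\mathbb{E}[X_0\mid X_1]-X_0$, which are nonzero whenever $X_1$ is not a measurable function of $X_0$. So there is no time at which the zero initial condition is available (it is exactly what must be proven), and the pointwise constraint $\mathbb{E}[r_t\mid X_t]=0$ does not by itself rule out nonzero solutions of a linear ODE; the sign of $\frac{d}{dt}\mathbb{E}\|r_t\|^2=-2\,\mathbb{E}[r_t^\top\nabla_x v^X(X_t,t)\,r_t]$ is uncontrolled. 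The paper's proof instead takes the second route you name but do not execute: $D_t v^X=0$ makes the characteristics of $v^X$ straight lines; the flow started from $X_0$ is then a straight, non-intersecting interpolation which, by uniqueness in Definition~\ref{app:def:rectifiability} and Theorem~\ref{app:thm:marginal_preservation}, is the compatible $Z$ of Definition~\ref{app:def:Z}, and Theorem~\ref{app:thm:equivalences} converts this into $V((X_0,X_1))=0$. (Even there, the identification of the straight characteristic through $X_0$ with the coupling's interpolant $(1-t)X_0+tX_1$ is asserted rather than argued, so that step deserves more care; but it is that identification, not the residual ODE, that you need to carry out to complete the proof.)
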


\begin{proof}
\(\Rightarrow\): If \(V=0\), Theorem~\ref{app:thm:equivalences} gives \(Z\equiv X\) and \(v^X(X_t,t)=\Delta^X\), which is constant in \(t\) along each path.
The chain rule implies
\(D_t v^X(X_t,t)=\partial_t v^X(X_t,t)+\nabla v^X(X_t,t)\,\dot X_t=0\).

\(\Leftarrow\): If \(D_t v^X(X_t,t)=0\) for \(t\in[0,1]\), then \(v^X(X_t,t)\) is constant in \(t\) along each path.
With \(Z_0=X_0\), the solution has the linear form \(X_t=(1-t)X_0+tX_1\) and distinct trajectories cannot intersect
because the ordinary differential equation with a well-posed vector field has unique solutions.
By Theorem~\ref{app:thm:marginal_preservation}, \(\mathrm{Law}(Z_t)=\mathrm{Law}(X_t)\) for all \(t\).
Applying Theorem~\ref{app:thm:equivalences} then yields \(V((X_0,X_1))=0\).
\end{proof}




\end{document}